\documentclass[sigconf,authorversion]{acmart}

\AtBeginDocument{%
  }

\copyrightyear{2026}
\acmYear{2026}
\setcopyright{cc}
\setcctype{by}
\acmConference[GECCO '26]{Genetic and Evolutionary Computation Conference}{July 13--17, 2026}{San Jose, Costa Rica}
\acmBooktitle{Genetic and Evolutionary Computation Conference (GECCO '26), July 13--17, 2026, San Jose, Costa Rica}
\acmDOI{10.1145/3795095.3805131}
\acmISBN{979-8-4007-2487-9/2026/07}

\usepackage{hyperref}[hyperfootnotes=true]
\usepackage{float}
\usepackage{physics}
\usepackage{tabularx}
    \newcolumntype{Y}{>{\centering\arraybackslash}X}
\usepackage{booktabs}
\usepackage{pdflscape}
\usepackage{csvsimple}
\usepackage{url}
\usepackage{nameref}
\usepackage{tablefootnote}
\usepackage{color, colortbl}
\usepackage{mathtools}
\usepackage{amsmath,amsfonts,mathtools}
\usepackage{bbm}
\usepackage[algo2e,ruled,vlined,linesnumbered]{algorithm2e}
\usepackage{xparse}
\usepackage{makecell,booktabs}
\usepackage{subfigure}
\usepackage{caption}
\newcommand{\ignore}[1]{}
\usepackage{thmtools,thm-restate}

\usepackage{tikz} 
\usetikzlibrary{shapes.geometric}
\usepackage{pgfplots}
\usepgfplotslibrary{fillbetween,statistics}
\pgfplotsset{compat=newest}
\pgfplotscreateplotcyclelist{tikzcycle}{%
draw=none \\ 
draw=none \\
fill=black!10 \\
thick,black \\ 
thick,blue,mark=square*\\ 
thick,red,mark=*\\ 
}
\pgfplotscreateplotcyclelist{muscycle}{%
thick,blue,mark=square*\\ 
thick,red,mark=*\\ 
}

\usepackage
[
    noabbrev,   
    nameinlink, 
]
{cleveref} 

\DeclareDocumentCommand{\set}{m g o}{
    \ensuremath{
        \IfNoValueTF{#3}{\left}{#3}\{#1
            \IfNoValueTF{#2}{}{
                \ \IfNoValueTF{#3}{\left}{#3}\vert\ \vphantom{#1}#2\IfNoValueTF{#3}{\right.}{}
            } \IfNoValueTF{#3}{\right}{#3}\}
    }\xspace
}

\definecolor{lightgreen}{rgb}{0.75,0.92,0.61}

\newcommand{\labelname}[1]{
  \def\@currentlabelname{#1}}

\newcommand{\N}{\mathbb{N}}

\newcommand{\ones}[1]{|#1|_1}
\newcommand{\zeros}[1]{|#1|_0}

\newcommand{\Bin}{\text{Bin}}
\definecolor{mygreen}{RGB}{1, 150, 122}

\providecommand{\ignore}[1]{}

\newcommand{\muoea}{($\mu$+1) EA\xspace}
\newcommand{\muoga}{($\mu$+1) GA\xspace}

\newcommand{\jump}{\mbox{\textsc{Jump$_k$}}\xspace}

\newcommand{\plateau}{\textsc{plateau}\xspace}

\newcommand{\pfu}{p_{\text{fu}}}

\newenvironment{proofofclaim}{\textsc{Proof of Claim.}}{\hfill\scriptsize\scalebox{1}{$\blacksquare$}}

	{
	\begin{center}
	\begin{algorithm2e}
	}%
	{
	\end{algorithm2e}
	\end{center}
	}
	
\allowdisplaybreaks
\allowdisplaybreaks

\begin{document}

\title[Parent Selection Mechanisms in Elitist Crossover-Based Algorithms]{Parent Selection Mechanisms\\in Elitist Crossover-Based Algorithms}

\author{Andre Opris}
\email{andre.opris@uni-passau.de}
\orcid{0000-0002-7730-7831}
\affiliation{%
  \institution{
University of Passau}
  \city{Passau}
  \country{Germany}
}

\author{Denis Antipov}
\email{denis.antipov@lip6.fr}
\orcid{0000-0001-7906-096X}
\affiliation{%
  \institution{Sorbonne Universit\'e, CNRS, LIP6}
  \city{Paris}
  \country{France}
}

\begin{abstract}  
    Parent selection methods are widely used in evolutionary computation to accelerate the optimization process, yet their theoretical benefits are still poorly understood. In this paper, we address this gap by proposing a parent selection strategy for the $(\mu+1)$ genetic algorithm (GA) that prioritizes the selection of maximally distant parents for crossover. We show that, with an appropriately chosen population size, the resulting algorithm solves the Jump$_k$ problem in $O(k4^kn\log(n))$ expected time. This bound is significantly smaller than the best known bound of $O(n\mu\log(\mu)+n\log(n)+n^{k-1})$ for any $(\mu+1)$~GA using no explicit diversity-preserving mechanism and a constant crossover probability. 
  
    To establish this result, we introduce a novel diversity metric that captures both the maximum distance between pairs of individuals in the population and the number of pairs achieving this distance. The main novelty of our analysis is that it relies on crossover as a mechanism for creating and maintaining diversity throughout the run, rather than using crossover only in the final step to combine already diversified individuals. The insights provided by our analysis contribute to a deeper theoretical understanding of the role of crossover in the population dynamics of genetic algorithms.
\end{abstract}

\begin{CCSXML}
<ccs2012>
   <concept>
       <concept_id>10003752.10010070.10011796</concept_id>
       <concept_desc>Theory of computation~Theory of randomized search heuristics</concept_desc>
       <concept_significance>500</concept_significance>
       </concept>
   <concept>
       <concept_id>10003752.10010061.10011795</concept_id>
       <concept_desc>Theory of computation~Random search heuristics</concept_desc>
       <concept_significance>500</concept_significance>
       </concept>
   <concept>
       <concept_id>10003752.10003809.10003716.10011136.10011797.10011799</concept_id>
       <concept_desc>Theory of computation~Evolutionary algorithms</concept_desc>
       <concept_significance>500</concept_significance>
       </concept>
   <concept>
       <concept_id>10010147.10010257.10010293.10011809.10011812</concept_id>
       <concept_desc>Computing methodologies~Genetic algorithms</concept_desc>
       <concept_significance>500</concept_significance>
       </concept>
 </ccs2012>
\end{CCSXML}

\ccsdesc[500]{Theory of computation~Theory of randomized search heuristics}
\ccsdesc[500]{Theory of computation~Evolutionary algorithms}
\ccsdesc[500]{Computing methodologies~Genetic algorithms}

\keywords{Evolutionary algorithms, Theory, Parent Selection, Crossover}

\maketitle

\section{Introduction} \label{sec:introduction}

\emph{Evolutionary algorithms} (EAs for brevity) are one of the most widely used random search heuristics for solving real-world optimization problems. They combine ease of application (they do not require any prior knowledge of the problem and treat it as a black box) with the ability to efficiently find good solutions.  During the last three decades theory has derived lots of valuable analyses of mutation-based EAs (the ones that can create new individuals only by taking one of the previously sampled individuals and changing it), and it is fair to say that we understand these algorithms quite well, including the best ways to apply them~\cite{Doerr2020,AntipovD21algo,lengler2020general,DoerrJL24}, the limits of their efficiency~\cite[Section 3.6]{Doerr20complexity}, and their behavior in more complex settings such as multi-objective optimization~\cite{DoerrQu2023a,opris2025tight}.

However, EAs that create new individuals with crossover by recombining two existing individuals (these algorithms are often called \emph{genetic algorithms}, or GAs) are still not well-understood, although the first proofs of crossover being helpful have appeared more than 25 years ago~\cite{Jansen1999}. This is since the population dynamics of GAs are much more complex than in mutation-based EAs: one needs to analyze how the diversity evolves over time, which is very challenging. At the same time, in practice crossover is considered to be an essential ingredient of an efficient EA~\cite{10.1145/3009966}. 

So far, theoretical studies of GAs have shown that crossover can immensely help when the population of the algorithm is diverse: recombining two good but different solutions into a new one can often result into even better individuals~\cite{Dang2016,CorusO20}. However, the main challenge lies in obtaining a diverse population, since crossover is generally believed to reduce the diversity or have no significant effect on it (see the overview of related work in Section~\ref{sec:related-work}). For this reason, most of the theoretical studies rely in their analyses on the diversity being increased in mutation-only steps of GAs, and these studies often use either a very small probability of crossover or some mechanisms that artificially improve diversity using the problem's properties. 

In the light of the previous studies, the main result of this paper comes as a surprise: we show that crossover can actually help to increase diversity when we regularly choose the most distant individuals as parents. Our analysis reveals a very interesting interaction of mutation and crossover, where mutation helps to increase the maximal distance between individuals in the population, while crossover helps to create more pairs of individuals in that distance from each other, which prevents the algorithm from decreasing this maximal distance. Roughly speaking, if we measure diversity via this maximal distance, then crossover allows us to get a solid foothold at the current diversity level, so we stay there for long enough for mutation to increase diversity. This mechanism turns out to be almost as effective at escaping local optima as very strong diversity preserving mechanisms (see~\cite{Dang2016} for the latter). In addition, we show that crossover can support fast hill-climbing, avoiding slow-downs that might be caused by large populations.

The key to such behavior is the strategy of \emph{parent selection}. This aspect of EAs has previously been studied mostly for non-elitist algorithms, where the parent population ``dies out'' in each iteration, and only the fittest offspring survives to reproduce in the next iteration and also in multi-objective optimization (see a detailed overview in Section~\ref{sec:related-work}). However, most of those studies considered mutation-only EAs, except for the paper by~\citet{Corus2018}. The results of this paper illustrate the potential benefits of using non-uniform parent selection in elitist algorithms, especially for catalyzing the ability of crossover to help to diversify the population.

Although our analysis is performed only on one benchmark function, the intuition behind the reasons of diversification of the population is not tailored to it, which makes the insights obtained in this paper interesting for practitioners. In particular, we are optimistic that the processes similar to the ones observed in our proofs take place also on problem landscapes with \emph{local optima networks} (LONs, introduced by~\citet{OchoaTVD08}), allowing crossover driven by parent selection with a bias toward furthest individuals to effectively find such networks without any prior landscape analysis of the problem class.    

\subsection{Our contribution}
\label{sec:contribution}

The main result of this paper is Theorem~\ref{thm:k-general}, which shows that the \muoga (formally defined in Section~\ref{sec:prelim}) which with probability $p$ performs crossover and chooses the furthest parents for it can find the global optimum of \jump (with jump size $k$) in expected $O(4^k n / p)$ fitness evaluations, if its population size $\mu$ is at least $c \cdot 4^k \log(n) / p$ for some constant $c$ and if its population is already on the \plateau of \jump. The main reason that this parent selection helps us is not because of the last step when we can choose the furthest parents to create the optimum (as it might seem in the context of the previous studies of the \muoga) but because of its role in diversification of the algorithm's population. We also show that this parent selection allows to reach the local optima of \jump in $O(n\mu/p + n\log(n/k))$, that is, it does not slow the algorithm down on ``easy'' parts of the problem and is even beneficial for hill-climbing tasks.   
 
The main technical novelty that this paper brings to the toolbox of runtime analysis of EAs is the new diversity measure that we study. This measure captures the maximum distance between the individuals in the population and the number of disjoint pairs of individuals in that distance (see Definition~\ref{def:most-distance-and-number}), and our analysis shows that it is a meaningful description of the population structure. Some of our results for this measure (Lemma~\ref{lem:basic-properties} and Lemma~\ref{lem:structural}) are proven without any bind to the fitness function or parent selection method, which, we believe, creates a decent foundation for more general analyses of the \muoga (even with uniform parent selection). 

\subsection{Related Work}
\label{sec:related-work}

\textbf{Theoretical studies of crossover.} Numerous theoretical studies are dedicated to finding the best practices of using crossover. Many of them analyze evolutionary algorithms with and without crossover on the \jump benchmark.\footnote{\jump is formally defined later in Section \emph{Preliminaries}.} This pseudo-Boolean test function has a unique global optimum, and all points in radius $k$ (which is a parameter of the benchmark) from the global optimum are local optima with the second-best function value. Typically, evolutionary algorithms easily reach these local optima and then they either have to perform a ``jump'' by mutating one local optimum precisely into the global optimum or by correctly recombining two local optima via crossover. This benchmark was first introduced by~\citet{Jansen1999,Jansen2002}, and they showed that classic mutation-only evolutionary algorithms need $\Theta(n^k)$ time to find the optimum, while crossover-based \muoga can do that in $O(4^k \cdot \mathrm{poly}(n))$ expected time (where $n$ is the dimensionality of the problem, usually called the \emph{problem size}). The gap between these two performances is super-polynomial, when $k = \omega(1) \cap O(\log(n))$. The main disadvantage of this result is that it was proven only for an algorithm with an extremely small probability to perform crossover (much smaller than commonly used in practice). In particular, the analysis was built on the idea that the algorithm first diversifies its population through mutation, covering many different local optima, and then crossover can recombine two individuals into the optimal solution. 

Many follow-up works that analyzed the \muoga on \jump can be divided into two categories. \emph{The first group} are the works focusing on the analysis of the vanilla \muoga with reasonable crossover rate. The central result in this line of work was obtained by~\citet{Dang2017}, who showed that, due to mutation, a small amount of diversity can spread through the population, so that the majority genotype constitutes only a $1 - \Omega(1)$ fraction of the population. Recently, this result was refined by~\citet{Doerr2024} who proved that the diversity created in this manner is maintained for an exponential time. However, in both cases, the \emph{amplitude} of the diversity (that is, the maximum distance between individuals) can be small, so the upper bound on the runtime proven in those papers is $\tilde O(n^{k - 1})$ fitness evaluations, that is, by factor $\tilde \Theta(n)$ better than the runtime $\Theta(n^{k})$ of classic mutation-based EAs. 

\emph{The second group} consists of works that analyzed a modified version of the algorithm, which introduced various mechanisms to speed-up diversification of the population and/or preserve the diversity of the population for a longer time~\cite{Koetzing2011a,Opris2024,Dang2017,Dang2016,Ren2024}. Some of the algorithms proposed in these works even achieved $O(4^{k} + n\log(n))$ runtime on \jump, which is the best theoretically possible performance for a large class of evolutionary algorithms, as it was shown by~\citet{Opris2025Jump}. Unfortunately, the arguments used in the proofs of these works were not applicable to the vanilla \muoga, and they do not explain its population dynamics. On the positive side, these works introduced various diversity measures which help to capture the state of the population, which is especially helpful for empirical studies. 

Interestingly, many analyses in both of these groups of works were similar in having a pessimistic view on crossover iterations: their arguments used mutation-only iterations as the main source of diversity improvements, while crossover iterations were assumed to produce the worst possible individuals. The purpose of using low crossover probability or diversity-preserving tie-breaking rules was primarily in dealing with the negative effect of such presumably bad crossover offspring. This resulted in the common belief that crossover does not help to diversify population or even prevent the algorithm from doing it.

There are many theoretical results studying the \muoga on other benchmark functions, such as OneMax~\cite{CorusO20}, RoyalRoad~\cite{Jansen2002,Jansen2005} (including a variant of this function for permutation spaces~\cite{Opris2025Royal}) or LeadingOnes~\cite{CerfL24}, and all of them to different extent show benefits of using crossover in evolutionary algorithms. However, the main argument there is the same as on \jump: the algorithm first obtains some diversity via mutations and then exploits it with crossover to get better progress than it can get by only using mutation. Crossover has also been analyzed on multiple combinatorial problems such as graph coloring~\cite{Fischer2005,Sudholt2005} and all-pairs shortest path~\cite{DoerrT09,Doerr2012}, and although the arguments used in their proofs are very problem-specific, they nevertheless demonstrate various ways in which crossover can be useful. Recently, following a rapid emergence of theoretical analyses of multi-objective optimizers, several works proved the positive impact of crossover also in this domain~\cite{OPRIS2026,Dang2024,DoerrQu2022}.

\textbf{Parent selection.} Strategy of selecting individuals to reproduce is one of the crucial parts of most evolutionary algorithms that allows to balance between exploration (finding new promising areas of the search space) and exploitation (thoroughly searching in an area of high interest). Various parent selection mechanisms were studied theoretically in the area of multi-objective optimization~\cite{ecj04,tcs20,Bian2022PPSN} and in the area of non-elitist EAs~\cite{Baker1989,Goldberg1990,Motoki02,Lehre2011,DangELQ22,Corus2018}. Most these studies focus on mutation-only EAs, with exception being~\cite{Corus2018}. In this paper, however, the proofs pessimistically assume that crossover offspring are always the worst possible solutions (with exception of their analysis of GAs on some trivial problems), which lines up with the analyses of the \muoga mentioned earlier (and therefore, with understanding of the role of crossover).
For the elitist single-objective EAs (such as the \muoga studied in this work) parent selection was only studied for the mutation-only counterpart of the \muoga, the \muoea, by~\citet{CorusLOW21} (further developed by~\citet{CorusOZ25}), where they showed a counter-intuitive result that giving the priority to the worst individuals to reproduce allows avoiding getting stuck in local optima and to find many optima in one run.

\textbf{Summary.} From this overview of existing results one can see that it is well understood that crossover can be beneficial when recombining individuals of a diverse population, but its role in diversifying a population is still unclear. This implies a lack of understanding of population dynamics of crossover-based algorithms. It is also notable that it is not known, which parent selection strategies work best for crossover-based EAs. In this work, we aim at making progress toward covering these two blind spots of theory of evolutionary computation. 

\section{Preliminaries} \label{sec:prelim}
\subsection{Notation}

We denote by $|A|$ the cardinality of a finite set $A$, by $\log$ the logarithm to base $2$ and by $\ln$ the logarithm to base $e$. For $n \in \N$ we write $[n] \coloneqq \{1,2,\ldots,n\}$. For $x \in \{0,1\}^n$ and $I \subset [n]$ we denote by $\ones{x}^I \coloneqq \sum_{i \in I} x_i$ the number of ones and by $\zeros{x}^I \coloneqq \sum_{i \in I} (1-x_i)$ the number of zeros at positions from $I$, respectively. For $I \subset [n]$ and $x,y \in \{0,1\}^n$, let $H_I(x,y) \coloneqq \sum_{i \in  I}|x_i-y_i|$ be the number of positions from $I$ in which $x$ and $y$ differ. If $I=[n]$ we also write $\ones{x}=\ones{x}^I$, $\zeros{x}=\zeros{x}^I$ and $H(x,y)=H_I(x,y)$, the latter denoting the \emph{Hamming distance} between $x$ and $y$. We call $x$ \emph{complementary to $y$ with respect to $I$} if $x_i \neq y_i$ for all $i \in I$. 
The \jump benchmark is defined as
\[	
\jump(x) \coloneqq
\begin{cases}
	\ones{x}+k & \text{if } \ones{x} = n \text{ or }  \ones{x} \leq n-k, \\
	n - \ones{x} & \text{otherwise}.
\end{cases}
\]
By \plateau we mean the set of all search points with $n-k$ ones in the context of $\jump(x)$. 

\subsection{Algorithm} 

We define a slightly modified version of the classic \muoga incorporating parent selection mechanisms. We use the following notation. For $t \in \mathbb{N}_0$ the population of the \muoga in the beginning of iteration $t$ is  denoted by $P_t$, and it a multiset of $\mu$ individuals, that is, $\{x_1, \ldots , x_\mu\}$ where $x_i \in \{0,1\}^n$ for $i \in \{1,\ldots , n\}$. Let also $\mathcal{D}_m^t$ be a distribution over $P_t$ and $\mathcal{D}_c^t$ be a distribution over $P_t \times P_t$. These distributions must be chosen by the algorithm user, but in our analysis this choice is not so important, so they should be seen as some arbitrary distributions.

The \muoga that we consider in this paper starts with some initial population, and then performs iterations until some stopping criterion is met.\footnote{As in most theoretical analyses, we do not specify the stopping criterion, but we assume that the algorithm does not stop before it finds the optimum.} In each iteration with probability $p_c$ (independently from other iterations) it performs a crossover step. In this case, two parents $x_1,x_2$ are selected following distribution $\mathcal{D}_c^t$, and uniform crossover is applied to $x_1$ and $x_2$, creating offspring $y'$ that takes each bit from one of the parents (with equal probability from each parent, and independently for each bit position), and then it applies standard bit mutation to $y'$, flipping each bit independently with probability $1/n$ to generate an offspring $y$. With probability $(1-p_c)$, the algorithm performs a mutation-only iteration, selecting parent $x$ following distribution $\mathcal{D}_m^t$ and applying standard bit mutation to it to generate an offspring $y$. The offspring $y$ replaces a worst search point $z$ from the previous population if its fitness is not worse than the fitness of $z$. If there are several worst points, the algorithm selects one of them uniformly at random. The pseudocode of this algorithm is depicted in Algorithm~\ref{alg:steady-state-GA}.

\begin{algorithm2e}[t]
\SetKwIF{WithP}{ElseWithP}{Else}{with probability}{do}{else with probability}{else}{end}
  $t \gets 0$\;
  let $P_0$ contain $\mu$ search points chosen uniformly at random\;
  \While{termination criterion not met}{
    \uWithP{$p_c$}{
        choose $x_1,x_2 \sim \mathcal{D}_c^t$\;
        $y' \gets \mathrm{crossover}(x_1,x_2)$\;
        $y \gets \mathrm{mutation}(y')$\;
    }
    \Else{
        select $x \sim \mathcal{D}_m^t$\;
        $y \gets \mathrm{mutation}(x)$\;
    }
    select $z \in P_t$ uniformly at random from all search points with minimum fitness in $P_t$\;
    \lIf{$f(y) \ge f(z)$}{$P_{t+1} \gets (P_t \cup \{y\}) \setminus \{z\}$}
	$t \gets t+1$\;
    }
    \caption{\muoga with parent selection mechanisms defined by distributions $\mathcal{D}_c^t$ and $\mathcal{D}_m^t$.}
\label{alg:steady-state-GA}
\end{algorithm2e}

This definition covers many different selection mechanisms that can be used for parent selection. In particular, tournament, linear ranking, and fitness proportionate selection~\cite{Lehre2011}, power-law selection~\cite{DangELQ22} or inverse tournament selection~\cite{CorusLOW21}. The important property of these selection mechanisms that we use in our analysis is the lower bound on the probability to choose the most distant pair of individuals as crossover parents. Namely, we define
\begin{align*}
    \pfu \coloneqq \inf_{t \in \N} \Pr\left[H(x_1, x_2) = d_t \mid (x_1, x_2) \sim \mathcal{D}_c^t\right],
\end{align*}
where $d_t = \max_{y_1, y_2 \in P_t} H(y_1, y_2)$. We do not use any properties of $\mathcal{D}_c^t$, except that $\pfu = \Omega(4^r (k-r)\ln(n)/(p_c \mu))$ for a natural number $1 \leq r \leq k$ (see Theorem~\ref{thm:k-general} below). Hence, it should only depend mildly on the parameters of the Jump$_k$ problem, and also on the algorithm's population size $\mu$. We note that the uniform selection does not satisfy this condition (since it yields $\pfu = \Theta(\frac{1}{\mu^2})$), and neither do all fitness-based selection methods (since $\pfu$ can be even zero). However, selection methods that rank all \emph{pairs} of individuals by their Hamming distance (in non-ascending order) and then choose a pair according to this ranking can satisfy this condition. In particular, a power-law selection (similar to the fitness-based selection from~\cite{DangELQ22}) or an $\ell$-tournament selection with $\ell = \Omega(4^r (k-r)\ln(n)\mu / p_c)$ satisfy the condition on $\pfu$.

\subsection{Useful Tools}

\emph{Gambler's ruin problem~\cite{Feller}:} Consider a gambler who wins or loses a dollar with probabilities $p$ and $q$, respectively. He has an initial capital of $\ell$ and plays against an opponent with the initial capital $a-\ell$. The game continues until the gambler’s capital either is reduced to zero or has increased to $a$, that is, until one of the two players is ruined. The problem may also be described as a random walk starting at $0 < \ell < a$ with absorbing states at $0$ and $a$ where it moves to the right or left with probability $p$ and $q$, respectively. The process terminates when the particle reaches either $0$ or $a$. The position of the particle after $n$ steps represents the gambler’s capital after $n$ games. Then the following holds. 

\begin{lemma}
\label{lem:ruin}[Chapter~14 in \cite{Feller}]
The probability of the gambler going broke (or the particle reaches $0$) is 
$$\frac{(q/p)^a-(q/p)^\ell}{(q/p)^a-1}.$$   
\end{lemma}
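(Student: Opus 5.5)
The plan is to prove the formula by a first-step analysis followed by solving the resulting linear recurrence. This is the classical argument, and throughout we assume $p+q=1$, $p,q>0$ and $p\neq q$; otherwise $q/p=1$ and the expression is undefined. For $0\le j\le a$, let $u_j$ denote the probability that the walk started at $j$ is absorbed at $0$. The boundary conditions are $u_0=1$ and $u_a=0$. Conditioning on the outcome of the first game, we get
\[
u_j = p\,u_{j+1} + q\,u_{j-1}, \qquad 0<j<a .
\]
This uses only the Markov property: after one step, the walk is a fresh gambler's-ruin walk started from $j\pm 1$.

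To solve the recurrence, write $u_j=(p+q)u_j$ and rearrange to obtain
\[
p\,(u_{j+1}-u_j) = q\,(u_j-u_{j-1}).
\]
Hence the differences $\Delta_j\coloneqq u_{j}-u_{j-1}$ satisfy $\Delta_{j+1}=(q/p)\Delta_j$, which gives $\Delta_j=(q/p)^{j-1}\Delta_1$. Telescoping then yields
\[
u_\ell = 1+\Delta_1\sum_{i=0}^{\ell-1}(q/p)^i = 1+\Delta_1\,\frac{(q/p)^\ell-1}{(q/p)-1}.
\]
Imposing $u_a=0$ determines $\Delta_1=-\bigl((q/p)-1\bigr)/\bigl((q/p)^a-1\bigr)$. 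Substituting this back gives
\[
u_\ell=1-\frac{(q/p)^\ell-1}{(q/p)^a-1}=\frac{(q/p)^a-(q/p)^\ell}{(q/p)^a-1},
\]
which is the claimed expression.

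The only genuinely delicate step is justifying that $u_j$ is well defined as the probability of reaching $0$ before $a$. This requires the process to terminate almost surely, so that absorption at $0$ and absorption at $a$ are complementary events and the recurrence has a unique solution with the given boundary values. To show almost-sure absorption, observe that from any state, $a$ consecutive wins lead to absorption. This event has probability $p^a>0$, so the probability of surviving $m\cdot a$ steps is at most $(1-p^a)^m\to 0$. Uniqueness then follows because the difference of two solutions satisfies the same recurrence with zero boundary values, and the telescoping argument above forces it to vanish.
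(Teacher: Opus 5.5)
Your proof is correct and is essentially the classical first-step/difference-equation argument of Feller, Chapter~XIV, which the paper cites without giving a proof of its own. One minor remark: almost-sure absorption is not actually needed for this formula, because $u_\ell$ is a well-defined probability, it satisfies the first-step equation with $u_0=1$, $u_a=0$, and your telescoping already determines $u_\ell$ uniquely when $(q/p)^a\neq 1$; absorption only matters if you also want the probability of reaching $a$ to be $1-u_\ell$.
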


\emph{Useful bounds:} The following inequalities on the amplified success rate in
$\lambda$ independent trials are useful.
\begin{lemma}[Lemma~10 in \cite{Badkobeh2015}]\label{lem:lambda-trials}
\label{lem:amplification}
For every $p\in[0,1]$ and every $\lambda \in \mathbb{N}$,
\begin{align*}
1-(1-p)^\lambda \in \left[\frac{p\lambda}{1+p\lambda},\frac{2p\lambda}{1+p\lambda}\right].
\end{align*}
\end{lemma}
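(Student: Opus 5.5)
The bound $1-(1-p)^\lambda \in \left[\frac{p\lambda}{1+p\lambda},\frac{2p\lambda}{1+p\lambda}\right]$ is cited from the literature, so a self-contained proof can come from two elementary estimates of $(1-p)^\lambda$. The edge cases are immediate: for $p=0$ all three quantities vanish, and for $p=1$ we get $1-(1-p)^\lambda = 1 \in [\lambda/(1+\lambda), 2\lambda/(1+\lambda)]$ because $\lambda\ge 1$. From now on assume $p\in(0,1)$ and write $x \coloneqq p\lambda$.

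\textbf{Lower bound.} The claim $1-(1-p)^\lambda \ge \frac{x}{1+x}$ is equivalent to $(1-p)^\lambda \le \frac{1}{1+x}$, i.e. $(1-p)^{-\lambda} \ge 1+p\lambda$. Since $(1-p)^{-1} = 1+\frac{p}{1-p} \ge 1+p$, Bernoulli's inequality gives
\[(1-p)^{-\lambda} \ge (1+p)^\lambda \ge 1+p\lambda.\]
This proves the lower bound.

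\textbf{Upper bound.} Here we need $(1-p)^\lambda \ge 1-\frac{2x}{1+x} = \frac{1-x}{1+x}$. If $x\ge 1$ the right-hand side is nonpositive and there is nothing to show. If $x<1$, Bernoulli's inequality gives $(1-p)^\lambda \ge 1-p\lambda = 1-x$. Since $1+x\ge 1$ and $1-x> 0$, we have $1-x \ge \frac{1-x}{1+x}$, which finishes this case. Equivalently, $1-(1-p)^\lambda \le \min\{1,x\} \le \frac{2x}{1+x}$, where the last step checks both regimes: for $x\le 1$ we have $x(1+x)\le 2x$, and for $x\ge1$ we have $1+x\le 2x$.

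The argument has no real obstacle beyond the case split at $x=1$ for the upper bound and remembering the trivial cases $p\in\{0,1\}$.
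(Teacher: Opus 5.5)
Your proof is correct. The lower bound via $(1-p)^{-1}\ge 1+p$ and Bernoulli's inequality checks out, and so does the upper bound via $1-(1-p)^\lambda\le\min\{1,p\lambda\}\le\frac{2p\lambda}{1+p\lambda}$, including the boundary cases $p\in\{0,1\}$.

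The paper gives no proof of this lemma; it imports it from Badkobeh et al. So there is no in-paper argument to compare against, and your argument works as a self-contained replacement.

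One variation for the lower bound is the chain $(1-p)^\lambda\le e^{-p\lambda}\le\frac{1}{1+p\lambda}$, which holds for every real $\lambda\ge 0$. Your Bernoulli step also extends to real $\lambda\ge 1$. That is all the paper needs: it only uses the lower bound, in the proof of Claim~\ref{cl:decrease-d-hard}, with $s=80kn/19$, which need not be an integer.
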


\emph{Negative drift theorem:} We also need the following version of the negative drift theorem (see Theorem~3 in~\cite{KoetzDrift}).

\begin{theorem}
\label{thm:negative-drift}
Let $(X_t)_{t \geq 0}$ be random variables over $\mathbb{R}$, each with finite expectation, and let $n>0$. With $T:=\min\{t \geq 0: X_t \geq n \mid X_0 \leq 0\}$ we denote the random variable describing the earliest point that the random process exceeds $n$, given a starting value of at most $0$. Suppose there are $c$ with $0<c<n$ and $\varepsilon < 0$ such that, for all $t$, the following holds:
\begin{itemize}
    \item[(1)]  $E[X_{t+1} - X_t \mid X_0,...X_t,T > t] \leq \varepsilon$,
    \item[(2)]  $|X_t-X_{t+1}| < c$.
\end{itemize}
Then, for all $s \geq 0$, 
$$\Pr(T \leq s) \leq s \exp(-\frac{n |\varepsilon|}{2c^2}).$$
\end{theorem}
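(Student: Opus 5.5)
We prove Theorem~\ref{thm:negative-drift} with an exponential supermartingale and Markov's inequality, as in Azuma--Hoeffding type arguments. Let $D_t := X_{t+1}-X_t$. We work with the stopped process $Y_t := X_{\min(t,T)}$, so that the drift condition~(1), which only holds while $T>t$, suffices at every step. Since $X_0 \le 0 < n$, we have $T \ge 1$. Moreover, on the event $\{T = s'\}$ we have $Y_{s'} = X_{s'} \ge n$. A union bound then gives
\[
\Pr(T\le s) \le \sum_{s'=1}^{s}\Pr(Y_{s'}\ge n).
\]
It therefore suffices to show $\Pr(Y_{s'}\ge n)\le \exp(-n|\varepsilon|/(2c^2))$ for every fixed $s'$.

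\textbf{Step 1: a one-step moment bound.} Fix $\lambda>0$ and condition on $X_0,\dots,X_t$ and $T>t$. By~(2) the increment $D_t$ lies in the interval $(-c,c)$, which has length $2c$. By~(1) its conditional mean $m$ satisfies $m\le\varepsilon$. Hoeffding's lemma then gives
\[
E\big[e^{\lambda D_t}\mid X_0,\dots,X_t,\,T>t\big]\le e^{\lambda m+\lambda^2(2c)^2/8}\le e^{\lambda\varepsilon+\lambda^2c^2/2}.
\]

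\textbf{Step 2: the supermartingale.} Choose $\lambda := |\varepsilon|/c^2$. The exponent becomes $-\varepsilon^2/c^2+\varepsilon^2/(2c^2) = -\varepsilon^2/(2c^2) \le 0$. So on $\{T>t\}$ we get $E[e^{\lambda Y_{t+1}}\mid\cdot]\le e^{\lambda Y_t}$. On $\{T\le t\}$ the process is frozen, so $Y_{t+1}=Y_t$. Hence $(e^{\lambda Y_t})_t$ is a nonnegative supermartingale with respect to the natural filtration. Iterating gives $E[e^{\lambda Y_{s'}}]\le E[e^{\lambda X_0}]\le 1$, using $X_0\le 0$.

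\textbf{Step 3: Markov's inequality.} For each $s'$,
\[
\Pr(Y_{s'}\ge n)=\Pr\big(e^{\lambda Y_{s'}}\ge e^{\lambda n}\big)\le e^{-\lambda n}=\exp\!\big(-n|\varepsilon|/c^2\big)\le \exp\!\big(-n|\varepsilon|/(2c^2)\big).
\]
Summing over $s'=1,\dots,s$ yields the claim, in fact with a factor $2$ to spare in the exponent.

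The main point needing care is Step~1. Hoeffding's lemma must be applied conditionally, with the drift bound holding only on $\{T>t\}$; this is exactly why we pass to the stopped process $Y_t$. We also need the support bound $|D_t|<c$ to give an interval of length $2c$, which produces the constant $c^2/2$. The condition that each $X_t$ has finite expectation, together with the bounded increments, ensures that all exponential moments exist and that the conditional expectations are well defined.
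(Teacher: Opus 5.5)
Your proof is correct. The paper gives no proof of this statement; it quotes it as Theorem~3 of \cite{KoetzDrift}.

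The form of that bound, with the factor $s$ and the constant $2c^2$, is exactly what a fixed-horizon argument produces. For each $t\le s$, apply Azuma--Hoeffding to the stopped supermartingale $X_t-\varepsilon t$, whose increments are bounded in absolute value by $c+|\varepsilon|<2c$. Then use $(n+|\varepsilon|t)^2\ge 4n|\varepsilon|t$ to eliminate $t$, and take a union bound over $t\le s$.

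Your single exponential supermartingale with a time-independent $\lambda$ takes a different route. It avoids the optimisation over $t$, handles the conditioning on $\{T>t\}$ transparently through the stopping, and gives more than is claimed:
\begin{itemize}
\item Since the stopped process freezes at $X_T\ge n$, you have $\{T\le s\}\subseteq\{Y_s\ge n\}$. Your union bound is therefore unnecessary, and $\Pr(T\le s)\le e^{-n|\varepsilon|/c^2}$ holds uniformly in $s$, hence also for $\Pr(T<\infty)$.
\item Choosing $\lambda=2|\varepsilon|/c^2$ still gives $\lambda\varepsilon+\lambda^2c^2/2\le 0$, and improves the bound to $e^{-2n|\varepsilon|/c^2}$.
\item Your argument uses neither $c<n$ nor the strictness of $|X_t-X_{t+1}|<c$, since Hoeffding's lemma works on $[-c,c]$. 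This matters for the paper: in Claim~\ref{cl:decrease-d-hard} the theorem is applied with $c=1$ to $X_t=\ell-m_t$, whose steps can have size exactly $1$.
\end{itemize}

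One minor correction: integrability of $e^{\lambda Y_t}$ follows from $Y_t\le ct$, which uses $X_0\le 0$ and the step bound. It does not come from the finiteness of $E[X_t]$.
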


\emph{On binomial distribution:} the following lemma estimates the probability that a binomial random value exceeds its expectation.

\begin{lemma}
\label{lem:binomial}
    Let $X \sim \Bin(n, \frac{1}{2})$ with $n \ge 1$. Then for all odd $n$ we have
    \begin{align*}
        \Pr[X > \frac{n}{2}] = \frac{1}{2}.
    \end{align*}
    For all even $n$ we have
    \begin{align*}
        \Pr[X > \frac{n}{2}] \ge \frac{1}{2}\left(1 - \sqrt{\frac{2}{\pi n}}\right).
    \end{align*}
    In both cases,
    \begin{align*}
        \Pr[X > \frac{n}{2}] \ge \frac{1}{2}\left(1 - \sqrt{\frac{1}{\pi}}\right).
    \end{align*}
\end{lemma}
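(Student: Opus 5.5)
For odd $n$ I will use the symmetry $X \mapsto n - X$, and for even $n$ I will combine the same symmetry with a bound on the central binomial coefficient. Since $X \sim \Bin(n,\frac12)$, the variable $n - X$ has the same distribution as $X$, so $\Pr[X > \frac n2] = \Pr[X < \frac n2]$. For odd $n$, the value $\frac n2$ is not an integer, so $\Pr[X = \frac n2] = 0$. The two probabilities then sum to $1$, which gives exactly $\frac12$.

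For even $n = 2m$ with $m \ge 1$, the symmetry gives
\[
\Pr[X > m] = \frac{1}{2}\left(1 - \Pr[X = m]\right) = \frac{1}{2}\left(1 - \binom{2m}{m}4^{-m}\right).
\]
So it suffices to show $\binom{2m}{m}4^{-m} \le \frac{1}{\sqrt{\pi m}} = \sqrt{\frac{2}{\pi n}}$. I will prove this with the Wallis-product argument. Set $a_m = \binom{2m}{m}4^{-m} = \prod_{j=1}^m \frac{2j-1}{2j}$. The sequence $m\,a_m^2$ is increasing, because
\[
\frac{(m+1)a_{m+1}^2}{m\,a_m^2} = \frac{(m+1)(2m+1)^2}{m(2m+2)^2} = \frac{(2m+1)^2}{4m(m+1)} > 1 .
\]
By Wallis' formula, $m\,a_m^2$ converges to $\frac1\pi$. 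An increasing sequence lies below its limit, so $a_m \le 1/\sqrt{\pi m}$. This is the only step that needs a real argument, and it is the one I would double-check. If I wanted to avoid citing Wallis' limit, I would instead use the Stirling bounds of Robbins, which give the same inequality directly.

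For the combined statement, consider the two cases separately. For odd $n$ the probability equals $\frac12$, which is at least $\frac12(1-\sqrt{1/\pi})$. For even $n \ge 2$ we have $\sqrt{2/(\pi n)} \le \sqrt{1/\pi}$, since $2/n \le 1$. The even-case bound is therefore at least $\frac12(1-\sqrt{1/\pi})$, which finishes the proof.
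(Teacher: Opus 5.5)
Your proof is correct and follows the same route as the paper: the symmetry $X \mapsto n-X$ handles both parities, the even case reduces to the bound $\binom{n}{n/2}2^{-n}\le\sqrt{2/(\pi n)}$, and $n\ge 2$ gives the combined bound. The only difference is that the paper cites this central binomial coefficient bound from Doerr's book chapter, while you prove it yourself by showing that $m a_m^2$ increases towards its Wallis limit $1/\pi$; this is a valid self-contained argument, and your alternative via Robbins' Stirling bounds would also work.
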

\begin{proof}
    Since $Y = n - X$ follows the same distribution as $X$, we can use the symmetry of binomial distribution:
    \begin{align}
        \label{eq:symmetry}
        \Pr[X > \frac{n}{2}] = \Pr[Y < \frac{n}{2}] = \Pr[X < \frac{n}{2}].
    \end{align}
    For odd $n$ we have $\Pr[X > \frac{n}{2}] + \Pr[X < \frac{n}{2}] = 1$, hence by~\eqref{eq:symmetry} we have $\Pr[X > \frac{n}{2}] = 1/2$.

    For even $n$ we need to take into account the event when $X = \frac{n}{2}$, that is, we have
    \begin{align*}
        \Pr[X > \frac{n}{2}] + \Pr[X = \frac{n}{2}] + \Pr[X < \frac{n}{2}] = 1,
    \end{align*}
    hence,
    \begin{align*}
        \Pr[X > \frac{n}{2}] = \frac{1}{2}\left(1 - \Pr[X = \frac{n}{2}]\right).
    \end{align*}
    For binomial distribution, using eq.~(1.4.18) in the book chapter by~\citet{doerr-theory-chapter} that bounds binomial coefficients, we have
    \begin{align*}
        \Pr[X = \frac{n}{2}] = \binom{n}{n/2} \frac{1}{2^n} \le \sqrt{\frac{2}{\pi n}}.
    \end{align*}
    Hence, we have
    \begin{align*}
        \Pr[X > \frac{n}{2}] \ge \frac{1}{2}\left(1 - \sqrt{\frac{2}{\pi n}}\right).
    \end{align*}
    
    Noting that the smallest even $n$ is $n = 2$ and uniting these two lower bounds, we prove the last statement of the lemma.
\end{proof}

\emph{On binomial coefficients.} The following lemma can be used to estimate some range of binomial coefficients. This result can be distilled from Lemma~8 by~\citet{DoerrW14}, where this result was formulated for the case of large $n$ (but the proof does not use it).
\begin{lemma}[Lemma~8 by~\citet{DoerrW14}]
    \label{lem:binom-lower}
    For any $n \in \N$ and any $\ell = \frac{n}{2} \pm \gamma\sqrt{n}$ we have $\binom{n}{\ell} \ge \frac{2^n}{2\sqrt{\pi n} e^{4\gamma}}$.
\end{lemma}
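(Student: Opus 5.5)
The statement is the binomial-coefficient bound in Lemma~\ref{lem:binom-lower}: for $\ell = \frac{n}{2} \pm \gamma\sqrt{n}$ we want $\binom{n}{\ell} \ge 2^n/(2\sqrt{\pi n}\,e^{4\gamma})$. The plan is to start from the central coefficient and control the ratio of consecutive coefficients as we move away from the center. By symmetry $\binom{n}{\ell} = \binom{n}{n-\ell}$, so it suffices to treat $\ell = \lceil n/2\rceil + j$ with $j \ge 0$ and $j \le \gamma\sqrt{n}$ (up to a rounding half-step, absorbed below). We may also assume $\gamma\sqrt n \le n/2$, since otherwise $\ell$ is not a valid index. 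For the center we use the standard lower bound $\binom{n}{\lfloor n/2\rfloor} \ge 2^n/\sqrt{2n}$, which holds for all $n\ge 1$ and follows from Wallis-type product estimates. This is the counterpart of the upper bound eq.~(1.4.18) of~\citet{doerr-theory-chapter} used in Lemma~\ref{lem:binomial}. Since $\sqrt{2n} \le 2\sqrt{\pi n}$, this leaves a spare factor of $\sqrt{2\pi}$ for slack.

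Next we telescope:
\[
\binom{n}{m+j} = \binom{n}{m}\prod_{i=1}^{j}\frac{n-m-i+1}{m+i},
\]
where $m=\lceil n/2\rceil$. Each factor is at least $\frac{n/2 - i}{n/2+i} = \frac{1-2i/n}{1+2i/n}$. Lower-bounding it by $\exp(-c\, i/n)$ requires keeping $2i/n$ bounded away from $1$. Summing would then give $\exp(-c\, j^2/n) \ge \exp(-c\gamma^2)$, but this is quadratic in $\gamma$ and would not give $e^{-4\gamma}$ for large $\gamma$. So the per-factor bound has to be used more cleverly.

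The key observation is that $j \le \gamma\sqrt n$ and $j \le n/2$ together give $j^2/n \le \gamma\sqrt{n}\cdot j/n \le \gamma\cdot\min(\gamma,\sqrt n/2)$. For $\gamma \le 1$, the estimate $\sum_i 4i/n \lesssim 2j^2/n \le 2\gamma^2 \le 2\gamma$ suffices, provided $j \le n/4$, say, where $\ln\frac{1-x}{1+x} \ge -4x$ for $x\le 1/2$... wait, this only holds for $x$ up to about $0.45$, so the case split must be made carefully. For larger $j$ we do not need the fine control, because the target $e^{-4\gamma}2^n/\sqrt n$ becomes tiny. When $j > n/4$ we have $\gamma \ge \sqrt n/4$, so $e^{-4\gamma} \le e^{-\sqrt n}$. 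The bound then follows from the trivial estimate $\binom{n}{\ell}\ge 1$ whenever $2^n e^{-4\gamma}/(2\sqrt{\pi n}) \le 1$. That requires $4\gamma \ge n\ln 2 - \ln(2\sqrt{\pi n})$, which $\gamma \ge \sqrt n/4$ does \emph{not} guarantee. This is why the argument should follow Doerr--Winzen's own route: bound $\ln\frac{1-x}{1+x} \ge -2x/(1-x)$ and handle the range $j\le n/4$ via $\sum_{i\le j} 8i/n \le 4j^2/n+4j/n$. One then checks that $4j^2/n \le 4\gamma j/\sqrt n$ and $j/\sqrt n\le \gamma$, which yields a total of $\lesssim 4\gamma^2$, only good for $\gamma\le 1$.

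I therefore expect the main obstacle to be the regime $\gamma > 1$, where the claimed $e^{-4\gamma}$ decay is weaker than the true Gaussian $e^{-2\gamma^2}$ decay. In that regime the bound cannot follow from the ratio argument alone. The first thing I would try is to read Lemma~8 of~\citet{DoerrW14} as applying only for $\gamma$ in a bounded range (e.g.\ $\gamma \le 1$, or $\gamma$ constant with the constant absorbed), and verify that this is the range used later in the paper. Within $\gamma\le 1$ the telescoping computation closes: the product is at least $e^{-2\gamma^2 - O(\gamma/\sqrt n)} \ge e^{-4\gamma}/\sqrt{2\pi}$ after using the slack factor. This holds once $n$ is at least a small constant, and small $n$ is checked directly.
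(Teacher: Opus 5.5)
\paragraph{The statement as printed is false for large $\gamma$.} You are right to worry about $\gamma>1$, but you stop short of the correct conclusion. The inequality is not merely out of reach of the ratio argument: it is false there, so no proof in that generality exists.

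\begin{itemize}
\item Your own $j>n/4$ discussion already contains a counterexample. Take $\ell=n$, i.e.\ $\gamma=\sqrt n/2$. The left side is $\binom{n}{n}=1$, while the right side is $2^n e^{-2\sqrt n}/(2\sqrt{\pi n})$, about $7\cdot 10^{19}$ for $n=100$.
\item This is not an edge effect. For fixed $\gamma$, $\binom{n}{n/2+\gamma\sqrt n}2^{-n}\sim\sqrt{2/(\pi n)}\,e^{-2\gamma^2}$, so the ratio of left to right side tends to $2\sqrt2\,e^{4\gamma-2\gamma^2}$. This is below $1$ for every fixed $\gamma>2.24$. For example, $n=10^4$, $\ell=5300$ gives roughly $1.2\cdot10^{-10}\cdot 2^n$ on the left versus $1.7\cdot10^{-8}\cdot 2^n$ on the right.
\end{itemize}

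The paper has no argument of its own that avoids this; it only points to Lemma~8 of Doerr and Winzen. So instead of ``reading'' the citation as restricted, do three things. State the restriction as a hypothesis: $0\le\gamma\le1$ and $\ell$ an integer in $[0,n]$. Say explicitly that the unrestricted form fails. Then check that this range covers every use: in Case~3 of Lemma~\ref{lem:better-offspring}, every application has $\gamma=1/\sqrt\Delta$, $\gamma=1/(2\sqrt\Delta)$, or an offset of at most $\sqrt\Delta$ from $\Delta/2$, hence $\gamma\le1$.

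\paragraph{Your route works for $\gamma\le1$ after two repairs.} The plan (central coefficient, telescoped ratio, then spend the $\sqrt{2\pi}$ slack) is sound, but two details need fixing.

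\begin{itemize}
\item The bound $\binom{n}{\lfloor n/2\rfloor}\ge2^n/\sqrt{2n}$ fails for $n=1,3,5$; for instance $\binom52=10<32/\sqrt{10}$. Use $\binom{n}{\lfloor n/2\rfloor}\ge 2^n/\sqrt{2n+2}$ instead. It follows from $\binom{2m}{m}\ge4^m/\sqrt{4m}$ and $\binom{2m+1}{m}=\frac12\binom{2m+2}{m+1}$, and it is still at least $2^n/(2\sqrt{\pi n})$.
\item Your ``wait'' is unfounded. The function $\ln\frac{1-x}{1+x}+4x$ has derivative $4-\frac{2}{1-x^2}\ge0$ on $[0,1/\sqrt2]$, so $\ln\frac{1-x}{1+x}\ge-4x$ holds on all of $[0,\frac12]$.
\end{itemize}

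With these in place the argument closes as follows.
\begin{itemize}
\item Each telescoping factor is at least $\frac{n/2-i}{n/2+i}$. For $\gamma\le1$ and $n\ge16$ we have $2i/n\le2/\sqrt n\le\frac12$, so $\binom{n}{\ell}\ge\frac{2^n}{\sqrt{2n+2}}e^{-4\gamma^2-4\gamma/\sqrt n}$.
\item On $[0,1]$ we have $4\gamma^2-4\gamma+4\gamma/\sqrt n\le4/\sqrt n$. Also $\sqrt{2\pi n/(n+1)}\ge e^{4/\sqrt n}$ for $n\ge21$. Hence the bound beats $\frac{2^n}{2\sqrt{\pi n}}e^{-4\gamma}$ for $n\ge21$.
\item The finitely many pairs $(n,\ell)$ with $n\le20$ are a direct check.
\end{itemize}

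The sharper estimate $e^{-2\gamma^2-O(\gamma/\sqrt n)}$ that you assert is therefore not needed. Drop the abandoned $j>n/4$ detour and make the hypothesis explicit, and this is a complete proof of the version the paper actually uses.
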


\section{The $(\mu+1)$ GA with Furthest Parent Selection Escapes Plateau Efficiently} \label{sec:results}

The goal of this section is to demonstrate that the $(\mu+1)$ GA with parent selection mechanisms can escape the \plateau of \jump efficiently. To achieve this, we analyze the population dynamics of this algorithm with special focus on the last part of optimization, when the whole population is on the \plateau. This analysis is structured as follows. First, to measure population diversity, we introduce $d(P_t)$ as the maximum Hamming distance between two individuals from $P_t$ and $m_t$ as the maximum number of disjoint pairs with this distance (Definition~\ref{def:most-distance-and-number}) and then give some elementary properties of this measure (Lemmas~\ref{lem:basic-properties} and~\ref{lem:structural}). We note that higher values of $d_t$ increase the probability of creating the global optimum from two individuals $x$ and $y$ with $H(x,y)=d_t$ via crossover and mutation afterwards, while higher values of $m_t$ make it less likely that $d_t$ will decrease in future iterations without first increasing. We then estimate the probability of increasing $d_t$ or $m_t$, as well as the probability of reaching the global optimum from two individuals with distance $d_t$ (Lemmas~\ref{lem:increase} and~\ref{lem:increase-2}). With all these ingredients in place, we proceed to prove the main result of this section, Theorem~\ref{thm:k-general}, which bounds the expected time to escape the \plateau. The proof strategy is as follows. As far as $d_t$ does not increase, $m_t$ performs an unfair random walk on $\{1, \ldots , \ell\}$ for a large value $\ell$ where the probability to increase $m_t$ is higher than decreasing it (Claim~\ref{cl:increase-m} in the proof of Theorem~\ref{thm:k-general}). Then, if $m_t \geq \ell$, the negative drift theorem is used to show that, with sufficiently high probability, $d_t$ increases or the global optimum is created, before $d_t$ decreases (Claim~\ref{cl:decrease-d-hard} in the proof of Theorem~\ref{thm:k-general}). These allow us to show that with constant probability $d_t$ never decreases, which allows us to condition on that and to derive an upper bound on the time the algorithm spends on the \plateau before it finds the optimum. 
 
\begin{definition}
    \label{def:most-distance-and-number}
    Let $P_t$ be a population of individuals. Let $d(P_t) \coloneqq \max\{H(x,y) \mid x,y \in P_t\}$ if $P_t \neq \emptyset$. Set $d(P_t)=-\infty$ if $P_t=\emptyset$. Further, let
    \begin{align*}
      m(P_t):&=\max\{|Y| \mid Y \subset  P_t \times P_t \text{ such that } x \neq y,\\
      &H(x,y) = d_t, \text{ and } \{x,y\} \cap \{w,z\} = \emptyset\\
      &\text{ for all } (x,y),(w,z) \in Y \}.
    \end{align*}
\end{definition}

One sees immediately the following basic properties.

\begin{lemma}
\label{lem:basic-properties}
  The following holds.
  \begin{itemize}
      \item[(1)]
      If there is only one pair $\{x,y\}$ of search points with distance $d(P_t)>0$ then $m(P_t)=1$.
      \item[(2)]
      We have that $m(P_t) \leq \lfloor{\mu/2}\rfloor$.
      \item[(3)]
      If $d_t$ increases by adding an individual $x$ to $P_t$, then $m(P_t \cup \{x\})=1$.
  \end{itemize}
\end{lemma}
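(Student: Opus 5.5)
All three items follow directly from Definition~\ref{def:most-distance-and-number}, so I verify them one at a time by unpacking what a set $Y$ in the definition of $m(P_t)$ looks like. Throughout I read $d_t$ in the definition as $d(P_t)$. I view an admissible $Y$ as a matching in the graph on the individuals of $P_t$, treating copies in the multiset as distinct vertices. Each edge of this graph joins two individuals at Hamming distance exactly $d(P_t)$, and the disjointness condition $\{x,y\}\cap\{w,z\}=\emptyset$ says exactly that the edges of $Y$ share no endpoint.

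For (1), suppose the only pair at distance $d(P_t)>0$ is $\{x,y\}$. Then the edge set of this graph is the single edge $\{x,y\}$. Since $d(P_t)>0$ we have $x\neq y$, so $Y=\{(x,y)\}$ is admissible, which gives $m(P_t)\ge 1$. Any admissible $Y$ uses only this one edge, so $|Y|\le 1$. Here I must pin down a small ambiguity: ordered pairs $(x,y)$ and $(y,x)$ can never both lie in $Y$, because they share endpoints and so violate disjointness. I state this explicitly, and then $m(P_t)=1$.

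For (2), the pairs in $Y$ are pairwise disjoint and each contains two distinct individuals. Hence $Y$ covers $2|Y|$ distinct members of a population of size $\mu$, so $2|Y|\le\mu$ and $|Y|\le\lfloor\mu/2\rfloor$ because $|Y|$ is an integer. The only subtlety is that the population is a multiset. I handle this by counting positions in the multiset rather than distinct genotypes.

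For (3), let $P'=P_t\cup\{x\}$ with $d(P')>d(P_t)$. Every pair inside $P_t$ has distance at most $d(P_t)<d(P')$. So every pair at distance $d(P')$ in $P'$ contains the new individual $x$, and therefore any two such pairs intersect in $x$. By disjointness an admissible $Y$ for $P'$ has at most one element. At least one pair attains the maximum $d(P')>0$ (the maximum is positive since it exceeds $d(P_t)\ge 0$), so $m(P')\ge 1$ and hence $m(P')=1$. The same argument applies if $x$ replaces a removed individual.

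I do not expect a real obstacle here. The only care needed is in the bookkeeping: the ordered-versus-unordered convention for pairs, and the multiset convention for copies. Both have to be settled explicitly so that the counting in (1) and (2) is unambiguous.
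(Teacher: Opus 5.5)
Your proposal is correct and follows the same argument as the paper. In (1) the single most-distant pair is the only admissible matching, in (2) disjoint pairs use $2|Y|\le\mu$ individuals, and in (3) every maximum-distance pair in $P_t\cup\{x\}$ contains $x$, so no two of them can be disjoint; your explicit handling of the ordered-pair and multiset conventions (and of $d(P_t)\ge 0$ for a nonempty population) just makes precise what the paper's three-line proof leaves implicit.
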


\begin{proof}
The only multiset of most distant pairs of search points is $Y=\{\{x,y\}\}$ which proves (1) and the number of disjoint pairs in $P_t$ is at most $\lfloor{\mu/2}\rfloor$ which proves (2). Every pair of individuals in $P_t \cup \{x\}$ with maximum distance contains $x$, proving (3).  
\end{proof}

We note that $d(P_t)$ and $m(P_t)$ have not been used before in theoretical studies of the \muoga. Most of the measures that have been used for this purpose were studied by~\citet{Dang2016} who incorporated such measures into the tie-breaking mechanism of the \muoga. In particular, such measures as convex hull (the number of positions with different bits present in it), total Hamming distance and fitness sharing were maximized when breaking ties between individuals with the same fitness, and Dang et al. showed that it immensely helps the crossover to create good individuals, speeding up the \muoga on \jump. The measure closest to ours is the measure studied empirically in~\cite{Doerr2024}, where the authors tracked the maximum distance and the probability to choose a pair of parents in that distance for crossover (that is, instead of $m(P_t)$ they considered a total number of pairs of individuals in distance $d(P_t)$). Those results looked very promising, however there were no theoretical analyses of this measure.

Graphically, $m(P_t)$ can be visualized as follows. Let $G_t = (V_t, E_t)$, be a graph where each vertex of $V_t$ represents one of the $\mu$ individuals in $P_t$, and there exists an edge between two vertices if and only if the distance between the two corresponding individuals is $d(P_t)$. Then $m(P_t)$ is the size of a maximum matching, which is the \emph{lower bound} on the number of nodes one has to remove to have no edge left. Now we present some straightforward structural results for $d(P_t)$ and $m(P_t)$ of any population $P_t$, independent of the underlying fitness function.

\begin{restatable}{lemma}{structuralresults}
\label{lem:structural}
    Let $P_t$ be a population with $d_t \coloneqq d(P_t)>0$, and $m_t \coloneqq m(P_t)$. Then the following properties hold.
    \begin{enumerate}
        \item Suppose $m_t=1$.
        Then there are at most two $y \in P_t$ such that $d(P_t \setminus \{y\}) < d_t$. For the other $x \in P_t$, $d(P_t \setminus \{x\}) = d_t$.
        \item Suppose $m_t>1$. Then there is $S \subset P_t$ with $|S|=2m_t$ such that $d(S) = d_t$ and $m(S)=m_t$. Hence, $m(P_t \setminus \{y\}) = m_t$ for all $y \in P_t \setminus S$.
    \end{enumerate}
\end{restatable}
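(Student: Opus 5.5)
We argue throughout with the graph picture introduced before the statement. Let $G_t=(V_t,E_t)$ have one vertex per individual of $P_t$ (multiset elements count separately) and an edge between two individuals exactly when their Hamming distance is $d_t$. Since $d_t>0$, copies of the same search point are never adjacent. Then $m_t$ is the size of a maximum matching in $G_t$. For any $y\in P_t$, removing $y$ cannot increase the maximum distance, so $d(P_t\setminus\{y\})\le d_t$. Equality holds if and only if $G_t - y$ still has at least one edge, because the edges of $G_t-y$ are exactly the pairs in $P_t\setminus\{y\}$ at distance $d_t$. Both parts reduce to statements about matchings and edges of $G_t$.

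\textbf{Part (1).} Here $m_t=1$, so $G_t$ has at least one edge but no two disjoint edges. A graph with no two vertex-disjoint edges, i.e.\ with matching number $1$, is either a star or a triangle. To see this, fix an edge $\{a,b\}$. Every other edge must meet $\{a,b\}$. If some edge $\{a,c\}$ and some edge $\{b,c'\}$ both exist with $c\neq b$ and $c'\neq a$, then disjointness forces $c=c'$, which gives a triangle on $\{a,b,c\}$. Any further edge would then be disjoint from one of the three triangle edges, so nothing else can be added. Otherwise all edges share $a$ or all share $b$, and the graph is a star.

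For a star with centre $a$, only the removal of $a$ leaves no edge, except when the star is a single edge $\{a,b\}$, in which case removing $a$ or $b$ kills it. Thus at most two vertices are critical. For a triangle, removing any one vertex leaves an edge, so no vertex is critical. In all cases at most two $y$ satisfy $d(P_t\setminus\{y\})<d_t$, and every other $x$ leaves an edge, so $d(P_t\setminus\{x\})=d_t$.

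\textbf{Part (2).} Take a maximum matching $Y$ of $G_t$ with $|Y|=m_t$, and let $S$ be the multiset of the $2m_t$ individuals it covers. Every pair in $S$ has distance at most $d_t$, and the matched pairs have distance exactly $d_t$, so $d(S)=d_t$. The induced graph on $S$ is a subgraph of $G_t$, so its maximum matching has size at most $m_t$. It contains $Y$, so $m(S)=m_t$. For $y\notin S$, the multiset $P_t\setminus\{y\}$ still contains $S$. Hence $d(P_t\setminus\{y\})=d_t$, the edge set shrinks only by edges at $y$, and $Y$ remains a matching of size $m_t$. Together with monotonicity, which says a subgraph cannot have a larger matching, this gives $m(P_t\setminus\{y\})=m_t$.

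The main subtlety is the multiset bookkeeping: removing $y$ removes one copy, and duplicates of an individual are distinct vertices. This works because distance $0<d_t$ prevents self-pairs. The star/triangle classification in Part~(1) is the only non-trivial combinatorial step. Part~(2) does not actually use $m_t>1$; the hypothesis only separates it from Part~(1).
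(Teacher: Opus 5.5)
Your proof is correct. Part~(2) follows the paper's argument: take a maximum family $Y$ of disjoint pairs at distance $d_t$ and let $S$ be the individuals it covers. You are more careful than the paper there. You explicitly verify $d(S)=d_t$ and $m(S)=m_t$, and you check that $d$ is unchanged after removing some $y\notin S$ before appealing to monotonicity of the matching number. The paper's phrase ``maximum over a smaller set'' leaves that check implicit.

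Part~(1) is where you take a genuinely different route. The paper fixes one pair $x,y$ with $H(x,y)=d_t$ and observes that removing any individual other than $x$ or $y$ leaves this pair in place, so $d$ cannot drop. Hence at most $x$ and $y$ can be critical. This one-line argument never uses $m_t=1$, just as you noticed that Part~(2) never uses $m_t>1$.

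Your star/triangle classification of graphs with matching number $1$ is valid, including the multiset bookkeeping and the check that no further edge can coexist with a triangle. It gives a sharper conclusion: exactly two critical individuals if $G_t$ is a single edge, exactly one (the centre) for a larger star, and none for a triangle. That extra precision is not needed for the statement, though. Nor is it needed where the lemma is used in Claim~\ref{cl:increase-m}, which only requires ``at most two'' critical individuals, so that $d_t$ decreases with probability at most $2/\mu$ when $m_t=1$. The paper's argument reaches that bound more directly.
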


\begin{proof}
  (1): Let $x,y \in P_t$ with $H(x,y)=d_t$. Removing any other individual cannot increase $d_t$ (since we reduce the number of pairs) and it does not reduce $d_t$, since $x$ and $y$ are still in the population. Hence, $d_t$ does not change.
  (2): Let $Y \subset P_t \times P_t$ with $|Y|=m_t$ such that $H(x,y)=d_t$ and $\{x,y\} \cap \{w,z\} = \emptyset$ for all $(x,y),(w,z) \in Y$ (which exists by the  definition of $m_t$). Let  
  $$S \coloneqq \{x \in P_t \mid \exists y \in P_t \text{ such that } (x,y) \in Y \vee (y,x) \in Y\}.$$
  Note that $S$ contains exactly the elements appearing in a pair in $Y$ and hence, $|S|=2m_t$. Removing any element not from $S$ cannot increase $m_t$ (since then $m_t$ is the maximum over a smaller set) and it also does not change $Y$, hence we still have $m_t$ disjoint pairs in distance $d_t$.
\end{proof}

In the remainder of this section we estimate the expected runtime of the \muoga on \jump when it starts with the whole population already on the \plateau, which is shown in Theorem~\ref{thm:k-general}.
Note that when the whole population is on the \plateau, then $d_t$ can only take even values (since the distance between two individuals on the \plateau can be only even). The following lemma provides lower bounds on the probabilities that, in a single iteration of Algorithm~\ref{alg:steady-state-GA}, the value $m(P_t)$ or $d(P_t)$ increases.

\begin{restatable}{lemma}{valuesincreaseone}
\label{lem:increase}
    Consider an iteration $t$ of the \muoga with furthest parent selection where $P_t$ is on the \plateau of \jump, $d_t:=d(P_t)$ and $m_t:=m(P_t)$. Then with probability at least 
        $$\frac{p_c \cdot \pfu}{4} \cdot \left(\frac{1}{2}\right)^{d_t} \left(1-\frac{2m_t+1}{\mu}\right)\eqqcolon p_{\text{up}}(d_t,m_t)$$
    either $d_t$ increases or $m_t$ increases.
\end{restatable}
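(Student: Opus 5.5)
We lower-bound the probability of one specific sequence of events in iteration $t$ that forces an increase of $d_t$ or of $m_t$. With probability $p_c$ the algorithm performs a crossover step. With probability at least $\pfu$ it then selects parents $x_1,x_2$ with $H(x_1,x_2)=d_t$. Let $I$ be the set of $d_t$ positions where they differ. Since both parents lie on the \plateau, exactly $d_t/2$ positions of $I$ have a one in $x_1$ (and a zero in $x_2$), and the other $d_t/2$ have the reverse. Outside $I$ the parents agree, so the crossover offspring $y'$ agrees with them there.

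\textbf{Target offspring.} We require that $y'$ copies $x_1$ on all positions of $I$, which happens with probability $(1/2)^{d_t}$, so that $y'=x_1$. We also require that mutation flips no bit, which happens with probability $(1-1/n)^n\ge 1/4$ for $n\ge 2$. Then $y=x_1$ lies on the \plateau and has the same fitness as every individual in $P_t$. Hence it is accepted and replaces a worst individual $z$ chosen uniformly at random from the whole population, since all individuals have minimal fitness.

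\textbf{Choice of the removed individual.} By Lemma~\ref{lem:structural}(2) for $m_t>1$, there is a set $S$ with $|S|=2m_t$ realising $m_t$ disjoint pairs at distance $d_t$. For $m_t=1$, take $S=\{x_1,x_2\}$, or any maximal pair. We require that $z\notin S\cup\{x_2\}$. This set has at most $2m_t+1$ elements, so the event has probability at least $1-(2m_t+1)/\mu$.

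\textbf{Why the measure increases.} Under these events, every pair in $S$ survives, so $d(P_{t+1})\ge d_t$. Since $d$ cannot decrease, either $d$ increases (and we are done), or $d_{t+1}=d_t$. In the latter case the population contains the $m_t$ disjoint pairs of $S$ together with the new copy $y=x_1$ and the surviving $x_2$.

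It remains to find an additional disjoint pair at distance $d_t$. The subtle point is that $x_1$ and $x_2$ may themselves be in $S$. To handle this, I would choose $S$ to contain the pair $(x_1,x_2)$ whenever possible. We may assume this: if $x_1,x_2$ both lie outside $S$, then $(x_1,x_2)$ is a further disjoint pair, contradicting the maximality of $m_t$. So at least one of them lies in $S$, and an exchange argument on maximum matchings lets us pick a maximum matching containing the edge $x_1x_2$. Then the new copy $y=x_1$ together with the surviving $x_2$ forms a new pair disjoint from... but $x_2$ is already used in $S$.

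The simplest fix is the graph view: a maximum matching $M$ of $G_t$, with $y$ a twin of $x_1$ whose neighbour is $x_2$. If $x_1x_2\in M$, then $y$ needs another neighbour, which a twin may not have. So requiring $y'=x_1$ alone might not increase $m$. The better target is an offspring $y$ complementary-ish: a new twin of a matched vertex $u$ whose partner $v$ has a second neighbour. I expect this step, guaranteeing that duplicating a parent genuinely augments the matching, to be the main obstacle. I would resolve it by choosing $x_1$ to be the parent whose duplicate can be matched with $x_2$ while $x_1$'s original stays matched within $S$. This is the reason $x_2$ is also excluded from removal, which accounts for the $+1$ in $2m_t+1$. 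Multiplying the factors gives $\frac{p_c\pfu}{4}(1/2)^{d_t}(1-\frac{2m_t+1}{\mu})$.
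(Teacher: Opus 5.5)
There is a genuine gap, and it is the step you flagged yourself: a copy of a parent is the wrong target offspring.

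\textbf{Why duplicating a parent fails.} A duplicate creates no new distances. So it can only help by enlarging a maximum matching of the distance-$d_t$ graph $G_t$, and this need not happen for either parent. Suppose $G_t$ is the path with edges $ux_1$, $x_1x_2$, $x_2v$ plus isolated vertices, and the selected pair is $(x_1,x_2)$. This is allowed, since $\pfu$ only bounds the probability of selecting \emph{some} pair at distance $d_t$. Then $m_t=2$.
\begin{itemize}
\item After adding a twin of $x_1$, the set $\{u,x_2\}$ still covers every edge.
\item After adding a twin of $x_2$, the set $\{x_1,v\}$ covers every edge.
\end{itemize}
So the matching number stays $2$ and $d_t$ is unchanged. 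This configuration occurs on the plateau: for $k=2$, take individuals with zero-sets $\{1,2\},\{3,4\},\{1,5\},\{2,3\}$ and fill the population with copies of the point with zero-set $\{1,3\}$. The same example refutes your exchange claim, since no maximum matching contains the edge $x_1x_2$. Hence the event you lower-bound does not imply the conclusion. Your proposed fix, choosing the parent whose duplicate can be matched, has nothing to choose from.

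\textbf{The missing idea: aim the offspring at an unmatched individual.} Fix $S$ with $|S|=2m_t$ realising $m_t$ disjoint pairs. If $2m_t+1\ge\mu$ the bound is $\le 0$ and the statement is trivial; otherwise pick $y\in P_t\setminus S$. Let $I$ be the $d_t=2b$ positions where the parents differ, $J=[n]\setminus I$, and $a=|y|_0^I$.
\begin{enumerate}
\item Every crossover offspring $x$ agrees with both parents on $J$. There the parents have $k-b$ zeros and $y$ has $k-a$, so $H_J(x,y)\ge|a-b|$.
\item Because the parents are complementary on $I$, crossover produces any fixed pattern on $I$ with probability $(1/2)^{d_t}$. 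If $a\ge b$, put the $b$ ones of $x$ on zeros of $y$; if $a<b$, put the $b$ zeros of $x$ on ones of $y$.
\item This $x$ lies on the plateau and satisfies $H_I(x,y)=2b-|a-b|$, so $H(x,y)\ge d_t$.
\end{enumerate}
If mutation flips no bit, there are two outcomes. If $H(x,y)>d_t$, then $d$ increases as long as $y$ survives. If $H(x,y)=d_t$, then $(x,y)$ is a new pair disjoint from $S$, so $m$ increases provided the removed individual avoids $S\cup\{y\}$.

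That is what the factor $1-(2m_t+1)/\mu$ pays for: the $+1$ protects $y$, not $x_2$. Your remaining factors are fine: $p_c\pfu$ for the parent choice, $1/4$ for an unchanged offspring, and acceptance because all individuals share the plateau fitness.
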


\begin{proof}
    If $m_t = \lfloor{\mu/2}\rfloor$ then the lemma trivially holds since $p_{\text{up}}(d_t,m_t) \leq 0$. So suppose that $m_t < \lfloor{\mu/2}\rfloor$. We show that the probability $p_{\text{up}}(d_t,m_t)$ results from crossover on a most distant pair, producing a specific outcome $x$ on \plateau to increase the number of disjoint pairs by one, omitting mutation and not removing any individual from such a pair afterwards. With probability $p_c \cdot \pfu$, the \muoga applies crossover on a pair of most distant parents. Suppose that this happens, and let $w,z \in P_t$ be individuals with $H(w,z)=d_t$, on which crossover is applied. Let $b \coloneqq d_t/2$ and $I \subset [n]$ be the set of positions $i$ with $w_i \neq z_i$. We call any $i \in I$ a \emph{complementary position}. Since $w,z$ are on \plateau, we have $\zeros{w}=\zeros{z}=k$ and hence, $\ones{w}^I=\ones{z}^I=\zeros{w}^I=\zeros{z}^I=b$ (see Figure~\ref{fig:lem:increase}). Let $J \coloneqq [n] \setminus I$ be the \emph{non-complementary} part. By Lemma~\ref{lem:structural}(2) there is $S \subset P_t$ such that $|S|=2m_t < \mu$ and $m(S)=m_t$. Let $y$ be any individual from $P_t \setminus S$, and let $a \coloneqq |y|_0^I$ (see Figure~\ref{fig:lem:increase}). We now aim at generating a crossover offspring $x$ such that (i) it is on the \plateau and (ii) it is in distance at least $d_t$ from $y$. If we succeed in it, and then this offspring is not changed by mutation, then adding $x$ into the population (without yet removing any individual) increases either $m_t$ (if $H(x, y) = d_t$, then they form a new disjoint pair which can be added to $S$) or even $d_t$ (if $H(x, y) > d_t$).
   
    To show how we can generate such an $x$, we first note that $x$ is on the \plateau if (and only if) it has exactly $b$ zero-bits (and thus $b$ one-bits) in $I$, since in $J$ it coincides with both parents that have $k - b$ zero-bits there (see Figure~\ref{fig:lem:increase}). The fact that $x$ coincides with its parents in $J$ also implies that $H_J(x, y)$ is the same as $H_J(w, y)$, which is at least the difference in the number of zero-bits in $J$ in these two individuals, that is, $H_J(x, y) \ge |(k - a) - (k - b)| = |a - b|$. Now we want to set bits of $x$ in $I$ in such way that there were at least $d_t - |a - b| = 2b - |a - b|$ different positions. 
    
    Consider first the case when $a \ge b$ (that is, majority of the bits of $y$ in $I$ are zeros). Then we can put $b$ one-bits of $x$ in the positions that are zeros in $y$, and put zeros in the rest of them. It implies that $y$ and $x$ are different in $b$ positions that are ones in $x$, and in all $2b-a$ positions that are ones in $y$, that is, $H_I(x, y) = b + 2b - a = 2b - |a - b|$. If $a<b$ then we can put $b$ zero-bits of $x$ in the positions that are ones in $y$, and put ones in the rest of them. It implies that $y$ and $x$ are also different in $b$ positions that are zeros in $x$, and in all $2b-a$ positions that are zeros in $y$, implying $H_I(x, y) = b + 2b - a = 2b - |a - b|$. Hence, we always can create offspring $x$ that is in distance at least $d_t$ from $y$ by choosing correct values for the bits in $I$, probability of which is at least $(\frac{1}{2})^{d_t}$. This offspring is then not changed by mutation with probability at least $(1 - \frac{1}{n})^n \ge \frac{1}{4}$, where we used $n \ge 2$ (since it is the minimum $n$ for which \jump can be defined).

    \begin{figure}
        \begin{center}
            \begin{tikzpicture}
                \node [left] at (-0.3, 3.25) {$w:$};
                \draw (-0.1, 3) rectangle (6.1, 3.5);
                \foreach \i in {0,...,10} {
                    \node at (0.2 * \i + 0.1, 3.25) {$1$};
                };
                \foreach \i in {11,...,23} {
                    \node at (0.2 * \i + 0.1, 3.25) {$0$};
                };
                \foreach \i in {24,...,29} {
                    \node at (0.2 * \i + 0.1, 3.25) {$1$};
                };

                \node [left] at (-0.3, 2.25) {$z:$};
                \draw (-0.1, 2) rectangle (6.1, 2.5);
                \foreach \i in {0,...,10} {
                    \node at (0.2 * \i + 0.1, 2.25) {$1$};
                };
                \foreach \i in {11,...,17} {
                    \node at (0.2 * \i + 0.1, 2.25) {$0$};
                };
                \foreach \i in {18,...,23} {
                    \node at (0.2 * \i + 0.1, 2.25) {$1$};
                };
                \foreach \i in {24,...,29} {
                    \node at (0.2 * \i + 0.1, 2.25) {$0$};
                };
                
                \draw [dashed] (3.6, 4) -- (3.6, -0.2);
                \draw [dashed] (-0.1, 4) -- (-0.1, 3.5);
                \draw [dashed] (6.1, 4) -- (6.1, 3.5);
                \draw [dotted] (2.2, 3.8) -- (2.2, 3.5);
                \draw [dotted] (4.8, 3.8) -- (4.8, 3.5);

                \draw (-0.1, 4) to[bend left=6pt] node [pos=0.5, above] {$J$} (3.6, 4);
                \draw (3.6, 4) to[bend left=6pt] node [pos=0.5, above] {$I$}  (6.1, 4); 

                \draw [<->] (2.2, 3.6) -- node [pos=0.5, above] {$k - b$} (3.6, 3.6);
                \draw [<->] (3.6, 3.6) -- node [pos=0.5, above] {$b$} (4.8, 3.6);
                \draw [<->] (4.8, 3.6) -- node [pos=0.5, above] {$b$} (6.1, 3.6);

                \node [left] at (-0.3, 1.25) {$y:$};
                \draw (-0.1, 1) rectangle (6.1, 1.5);
                \node at (1.75, 1.25) {$k - a$ zeros};
                \node at (4.85, 1.25) {$a$ zeros};

                \node [left] at (-0.3, 0.25) {$x:$};
                \draw (-0.1, 0) rectangle (6.1, 0.5);
                \foreach \i in {0,...,10} {
                    \node at (0.2 * \i + 0.1, 0.25) {$1$};
                };
                \foreach \i in {11,...,17} {
                    \node at (0.2 * \i + 0.1, 0.25) {$0$};
                };
                \node at (4.85, 0.25) {$b$ zeros};

            \end{tikzpicture}
            \caption{Illustration to the proof of Lemma~\ref{lem:increase}, where the two parent individuals are $w$ and $z$, another individual to which we aim to create a pair is $y$ and the crossover offspring is $x$.}
            \label{fig:lem:increase}
        \end{center}
    \end{figure}
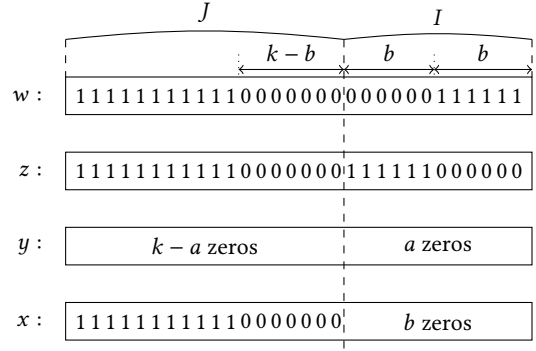
   
    Finally, we also take into account the selection step, when we remove a random individual from the population. If $H(x,y)>d_t$, the lemma holds since $m(P_t) \geq 1$ and $d_t$ increases if $y$ is not removed (happening with probability $1-1/\mu$). If $H(x,y)=d_t$, then $m(P_{t+1}) = m(S \cup \{x,y\})=m_t+1$ and $m_t$ increases if no individual from $S \cup \{y\}$ is removed, which happens with probability $1-(2m_t+1)/\mu$. Bringing together the probabilities to perform crossover on the most distant parents, creating the right offspring, not mutating it and then not removing a wrong individuals in the selection completes the proof.
\end{proof}

We also provide lower bounds on the probabilities of increasing $d_t$ and of creating the global optimum, each independent of $m_t$.

\begin{restatable}{lemma}{valuesincreasetwo}
\label{lem:increase-2}
    Consider an iteration $t$ of the \muoga with furthest parent selection where $P_t$ is on \plateau, $d_t:=d(P_t)$ and $m_t:=m(P_t)$. Suppose that $k \leq n/3$. Then the following holds.
    \begin{itemize}
        \item[(1)] If $d_t<2k$, then with probability at least 
        $$p_c \cdot \pfu \cdot \left(\frac{1}{2}\right)^{d_t} \frac{1}{3en} \left(1-\frac{1}{\mu}\right) \eqqcolon p_{\text{up}}^*(d_t)$$
        $d_t$ increases.
        \item[(2)]
        With probability at least  
        $$p_c \cdot \pfu \cdot \left(\frac{1}{2}\right)^{d_t} \frac{1}{en^{k-d_t/2}} \eqqcolon p_{\text{opt}}(d_t)$$
        the global optimum is created.
    \end{itemize}
\end{restatable}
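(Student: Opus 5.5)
Both parts follow the template of Lemma~\ref{lem:increase}. With probability $p_c \cdot \pfu$ the algorithm performs crossover on a pair $w,z$ with $H(w,z)=d_t$. As there, let $I$ be the set of complementary positions and $J=[n]\setminus I$, and set $b=d_t/2$. Then each parent has exactly $b$ zeros in $I$ and $k-b$ zeros in $J$, and the crossover offspring coincides with both parents on $J$. For each part I will prescribe the bits of the crossover offspring in $I$, which happens with probability $(1/2)^{d_t}$, and then require a specific mutation.

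\textbf{Part (2).} Let the crossover set all bits in $I$ to one. The resulting $y'$ has exactly $k-b=k-d_t/2$ zeros, all of them in $J$. Mutation must then flip exactly these $k-d_t/2$ zero-bits and no other bit. This happens with probability
\[
\left(\tfrac1n\right)^{k-d_t/2}\left(1-\tfrac1n\right)^{n-k+d_t/2}\ge \frac{1}{e\,n^{k-d_t/2}} .
\]
The global optimum is the unique best point, so it is accepted. Multiplying the three factors gives $p_{\text{opt}}(d_t)$.

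\textbf{Part (1).} Take a most distant pair $w,z$ and aim for an offspring $x$ on the \plateau with $H(x,w)=d_t+2$. Since $w$ stays in the population unless it is the removed individual, which happens with probability $1-1/\mu$, the new population then has strictly larger maximum distance. The offspring $x$ is accepted because it lies on the \plateau and so is not worse than the current worst individual.

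For the construction, let the crossover copy $z$ on $I$, which has probability $(1/2)^{d_t}$, so that $y'=z$. Then let mutation flip exactly one one-bit and one zero-bit in $J$, chosen so that both are positions where $w$ agrees with $z$. This is possible because $d_t<2k$ gives $b<k$, so $J$ contains $k-b\ge1$ zeros of $z$. It also contains $n-k-b\ge n-2k\ge k\ge 1$ ones, where we use $k\le n/3$. The result stays on the \plateau and has distance $d_t+2$ from $w$.

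The probability of this specific two-bit flip is at least
\[
\frac{(k-b)(n-k-b)}{n^2}\Bigl(1-\frac1n\Bigr)^{n-2}\ge \frac{n-2k}{e\,n^2}\ge\frac{1}{3en},
\]
using $k-b\ge1$ and $n-2k\ge n/3$. Combining this with $p_c\pfu$, $(1/2)^{d_t}$ and $1-1/\mu$ gives $p_{\text{up}}^*(d_t)$.

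The main obstacle is this last step: keeping the offspring on the \plateau while increasing its distance from a surviving individual. It needs the counting of available zero and one positions in $J$, where the hypotheses $d_t<2k$ and $k\le n/3$ enter. If the estimate of the number of ones in $J$ turns out too tight, I would instead count over pairs involving any zero of $z$ in $J$ and any one of $z$ in $J$. That count is at least $1\cdot(n-2k)$, which still yields the $1/(3en)$ factor.
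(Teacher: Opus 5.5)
Your proposal is correct and is essentially the paper's own proof. It uses the same crossover outcomes on $I$ (a copy of $z$, respectively all ones), the same mutation events (one one-bit and one zero-bit of $J$, respectively exactly the $k-d_t/2$ zeros in $J$), and the same count $(k-b)(n-k-b)/n^2$ with $(1-\frac1n)^{n-2}\ge\frac1e$. Attributing the factor $1-\frac1\mu$ to the survival of $w$ is even slightly more accurate than the paper's wording.

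The only blemish, which the paper shares, is in part (2): the estimate $(1-\frac1n)^{n-k+d_t/2}\ge\frac1e$ fails in the edge case $d_t=2k$, where the exponent is $n$. This is harmless for Theorem~\ref{thm:k-general}, where $d_t\le 2r\le 2k-2$. It can also be repaired by noting that each position of $I$ ends as a one after crossover and mutation with probability exactly $\frac12$. This gives the total $2^{-d_t}n^{-(k-d_t/2)}(1-\frac1n)^{n-k-d_t/2}\ge 2^{-d_t}\frac{1}{e\,n^{k-d_t/2}}$.
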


\begin{proof}
    
    Statement (1): The following scenario increases $d_t$. The \muoga performs crossover, chooses two most distant parents $w,z$ and creates $x$ on \plateau via crossover such that $H(w,x) = d_t$ (e.g. $x=z$). Afterwards, it flips exactly a one bit and a zero bit in $x$ at positions where $w$ and $z$ coincide and remains the other bits unchanged to create $x'$. Then $H(w,x') = d_t+2$. Finally, $x'$ should not be removed from $P_t$. Hence, $d_t$ is increased with probability at least
    \begin{align*}
    p_c \cdot \pfu &\cdot \left(\frac{1}{2}\right)^{d_t} \frac{(k-d_t/2)(n-k-d_t/2)}{n^2} \cdot\left(1-\frac{1}{n}\right)^{n-2} \left(1-\frac{1}{\mu}\right) \\
    &\geq p_c \cdot \pfu \cdot \left(\frac{1}{2}\right)^{d_t} \frac{1}{3en} \left(1-\frac{1}{\mu}\right).
    \end{align*}

    Statement (2): To create the optimum, the \muoga can perform crossover, choose two most distant parents $w,z$ and create the outcome $x$ with $x_i=1$ at all positions $i$ where $w_i \neq z_i$. Then it can flip the remaining $k-d_t/2$ zeros with mutation and remains the other bits unchanged. This happens with probability at least
    \begin{align*}
    &p_c \cdot \pfu \cdot \left(\frac{1}{2}\right)^{d_t} \frac{1}{n^{k-d_t/2}} \left(1-\frac{1}{n}\right)^{n-k+d_t/2}
    \end{align*}
    concluding the proof.
\end{proof}

Finally we are able to prove the main theorem in this paper which gives an upper bound for the expected time until the \muoga with furthest parent selection finds the global optimum, starting from a population on the \plateau. The theorem holds for arbitrary values of $2 \leq k \leq n/3$ but requires that $\mu \ge c \cdot 4^r (k-r)\ln(n)/(p_c \pfu)$ for a sufficiently large constant $c$, where $p_c$ and $\pfu$ may depend on $\mu$.

\begin{theorem}\label{thm:k-general}
  Consider the \muoga with furthest parent selection on \jump for $2 \leq k \leq n/3$. Let $r \in [k-1]$. Suppose $\mu \ge c \cdot 4^r (k-r)\ln(n)/(p_c \pfu)$ for a sufficiently large constant $c$ and suppose that it starts with an arbitrary population $P_0$, all individuals of which are on the \plateau. Then the expected number of fitness evaluations that the algorithm takes to find the global optimum is $O(4^r \cdot n^{k-r}/(p_c \pfu))$.
\end{theorem}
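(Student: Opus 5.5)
We follow the roadmap outlined at the start of this section: track the pair $(d_t,m_t)$ while the population stays on the \plateau, and use the fact that $d_t$ is even. The target is the level $d_t\ge 2r$. Once it is reached, Lemma~\ref{lem:increase-2}(2) gives success probability at least $p_{\text{opt}}(2r)=p_c\pfu\,4^{-r}/(e n^{k-r})$ per iteration, provided $d_t$ never drops below $2r$. That alone yields an expected waiting time of $O(4^r n^{k-r}/(p_c\pfu))$, which is the claimed bound.

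So the proof reduces to two things. First, $d_t$ must climb from its initial value to $2r$ within time that does not dominate this bound. Second, with constant probability $d_t$ must never decrease during the whole relevant time span. A restart argument then handles the complementary event: $d_t\ge 2$ is always regained quickly, so we lose only a constant factor.

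\textbf{Climbing phase.} Fix a level $d=d_t<2r$. By Lemma~\ref{lem:increase}, each iteration increases $m_t$ or $d_t$ with probability at least $p_{\text{up}}(d,m)\ge \frac{p_c\pfu}{8}2^{-d}$, as long as $m\le \mu/4$.

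Now bound the probability that $m_t$ decreases. For that, the removed individual must be one of the $\le 2m_t$ individuals of the set $S$ from Lemma~\ref{lem:structural}. The removed individual is chosen uniformly among the $\mu+1$ candidates with equal fitness, so this happens with probability at most about $2m_t/\mu$. It additionally requires that the offspring is accepted without creating a new pair. Hence the probability of a decrease is at most $O(m_t/\mu)$.

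With $\mu\ge c\,4^r(k-r)\ln n/(p_c\pfu)$ and $m_t\le \ell$ for a threshold $\ell:=\Theta((k-r)\ln n)$, the up-probability exceeds the down-probability by a large constant factor. Here I would choose $\ell$ so that $(2\ell)/\mu \le p_{\text{up}}/2$; this is exactly what the assumption on $\mu$ buys. By Lemma~\ref{lem:ruin} (the gambler's ruin applied to $m_t$, which is Claim~\ref{cl:increase-m}), $m_t$ reaches $\ell$ before $d_t$ drops, with probability $1-O(1/\text{poly})$.

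\textbf{Holding phase.} Once $m_t\ge \ell$, apply Theorem~\ref{thm:negative-drift} to $X_t:=\ell-m_t$. The drift is negative by a constant fraction of $p_{\text{up}}$, and the steps are bounded by $c=1$: each iteration changes $m_t$ by at most one, since adding or removing one individual changes a maximum matching by at most one. This gives that $m_t$ hits $0$ (equivalently, $d_t$ decreases, Claim~\ref{cl:decrease-d-hard}) within $s$ steps with probability at most $s\exp(-\Omega(\ell))$.

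Choosing the constant in $\ell=\Theta((k-r)\ln n)$ large enough makes this at most $n^{-(k-r)-2}$ even for $s=\mathrm{poly}\cdot n^{k-r}4^r/(p_c\pfu)$. Meanwhile, Lemma~\ref{lem:increase-2}(1) raises $d$ by $2$ at rate $p^*_{\text{up}}(d)=\Omega(p_c\pfu 2^{-d}/n)$.

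\textbf{Time accounting.} Summing over the levels $d=2,4,\dots,2r-2$, the climbing costs
\[
O\!\left(\sum_{d} 2^{d} n/(p_c\pfu)\right)=O(4^r n/(p_c\pfu)),
\]
plus the ruin-walk times. Each walk needs $O(\ell/p_{\text{up}})=O(\ell\,4^r/(p_c\pfu))$ iterations. Summed over the $r$ levels this is $O(4^r n^{k-r}/(p_c\pfu))$ because $k-r\ge 1$; the case $k-r=1$ needs care, where $\ell\approx\ln n\le n$.

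\textbf{Main obstacle.} The hardest step is the down-probability bound on $m_t$ used in Claim~\ref{cl:increase-m}. Removing a vertex of the maximum matching lowers $m_t$ only if no alternative matching exists, so the upper bound $2m_t/\mu$ is only a pessimistic estimate. The delicate case is $m_t=1$, where a drop in $d_t$ is possible: here one must use that Lemma~\ref{lem:structural}(1) gives only two critical individuals. A second subtle point is conditioning on "$d_t$ never decreases" without biasing the success events; I would handle this by a union bound rather than by conditioning.
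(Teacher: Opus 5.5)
\textbf{Gap in the holding phase.} You apply Theorem~\ref{thm:negative-drift} to $X_t=\ell-m_t$ with one step per iteration of the algorithm.

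On that time scale the drift is only $-\Theta(p_{\text{up}})=-\Theta(p_c\pfu 2^{-d})$. Because $|\varepsilon|$ sits in the exponent, the theorem gives $s\exp(-\Omega(\ell\,p_c\pfu 2^{-d}))$, not $s\exp(-\Omega(\ell))$. Your own constraint $2\ell/\mu\le p_{\text{up}}/2$ caps $\ell$ at $O((k-r)\ln n)$ for the smallest admissible $\mu$. So this exponent can be $o(1)$, and the bound is vacuous.

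Independently of this, your $s$ counts real iterations and therefore carries the factor $4^r/(p_c\pfu)$. The term $e^{-\Theta((k-r)\ln n)}=n^{-\Theta(k-r)}$ cannot absorb that factor. Examples are $r=k-1$ with $k=\omega(\log n)$, or a tiny $p_c\pfu$; the theorem constrains $p_c\pfu$ only through $\mu$. So the failure bound $n^{-(k-r)-2}$ is not obtained.

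The missing idea is to pass to the embedded chain of relevant iterations. These are the iterations in which one of the following occurs:
\begin{itemize}
\item $A_t$: $m_t$ or $d_t$ decreases;
\item $B_t$: $m_t$ increases;
\item $C_t$: $d_t$ increases, resp.\ the optimum is found.
\end{itemize}
Conditioned on such a step, the drift of $\ell-m_t$ is at most $-3/13$. This holds because $2\ell/\mu\le\frac{p_c\pfu}{8}4^{-r}$ while $c(d)=\frac{p_c\pfu}{5}2^{-d}$. Such a step is a success with probability at least $1/(4n)$ for $d<2r$ and at least $1/(2n^{k-r})$ for $d=2r$, since the common factor $p_c\pfu 2^{-d}$ cancels.

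Hence only $O(kn)$ resp.\ $O(n^{k-r})$ relevant steps must be survived. Taking $\ell=\lfloor 20\delta\ln(kn)\rfloor$ with $\delta=1$ resp.\ $\delta=k-r$ makes $s\,e^{-3\ell/26}$ at most $19/(20k)$ resp.\ $1/2$ (Claim~\ref{cl:decrease-d-hard}). Real time is then accounted for separately via Lemma~\ref{lem:increase-2}.

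\textbf{Gap in the climbing phase.} Suppose up-moves beat down-moves by a constant factor, and you feed this into Lemma~\ref{lem:ruin}. That yields a per-level ruin probability of the order of that constant ratio, not $O(1/\mathrm{poly})$. Over up to $r=k-1$ levels, your argument then only certifies survival probability $(1-\Theta(1))^{r}=2^{-\Omega(r)}$. Your restart argument would therefore lose far more than a constant factor.

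The paper uses the fact that the down-rate bound $2\ell/\mu\propto 4^{-r}$ is the same on every level, while $c(d)\propto 2^{-d}$ grows as $d$ decreases. This gives the level-dependent failure bound $\frac58 2^{-(2r-d)}$ (Claim~\ref{cl:increase-m}), and these bounds sum geometrically. Together with $1-\frac{19}{10k}$ per lower level and $\frac14$ at $d=2r$, this makes the probability that $d_t$ never decreases $\Omega(1)$. A sharper birth--death calculation using that the down-probability at $m_t=1$ is only $O(1/\mu)$ would also work, but that is not what you argue.

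Finally, the $2m_t/\mu$ bound you single out as the main obstacle is immediate from Lemma~\ref{lem:structural}. The real difficulty is the time-scale issue above.
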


\begin{proof}
Let $d_t \coloneqq d(P_t)$ and $m_t \coloneqq m(P_t)$ from Definition~\ref{def:most-distance-and-number}. The main goal is to show that with at least constant probability, the value $d_t \in \{0,2, \ldots , 2r-2\}$ never decreases before the all-ones string is created. To this end, we first show in Claim~\ref{cl:increase-m} that for every $d_t \in \{0,2, \ldots , 2r-2\}$, with sufficiently high probability, either a pair of individuals with distance at least $d_t+2$ is created, or $m_t$ increases to a large value without decreasing $d_t$. Then, in Claim~\ref{cl:decrease-d-hard} we apply the negative drift theorem to prove that, with sufficiently high probability, a decrease of $d_t$ does not occur before either its increase (if $d_t < 2r$ so that it is possible) or the finding of the global optimum. Afterwards we conduct the desired runtime analysis with the method of typical runs~\citep[see][Section 11]{Wegener2002}. For convenience, we define $\ell \coloneqq \ell(\delta,k,n) \coloneqq \lfloor{20\delta \ln(kn)}\rfloor$ for $1 \leq \delta \leq k-r$, where $\delta$ is an auxiliary variable introduced to apply the negative drift theorem for different values of $d_t$ in Claim~\ref{cl:decrease-d-hard}.
Let $c$ from the condition on $\mu$ be at least 640. Then due to $\ln(nk) \leq 2 \ln(n)$
\begin{align}
\label{eq:ell-und-mu}
\frac{\ell}{\mu} \leq \frac{20\delta \ln(kn)}{320 \cdot 4^r(k-r)\ln(kn)/(p_c \pfu)} \leq \frac{p_c \pfu}{16 \cdot 4^r}. 
\end{align}

\begin{restatable}{claim}{claimone}
\label{cl:increase-m}
    Let $1\leq \delta \leq k-r$. Then for every $d_t \in \{2, \ldots , 2r\}$ the probability to increase $d_t$ (if $d_t < 2r$) or to increase $m_t$ to at least $\ell$ in a future iteration before decreasing $d_t$ is at least $1-\frac{5}{8}\frac{1}{2^{2r-d_t}}$.
\end{restatable}
\begin{proofofclaim}
  Without loss of generality we assume that $m_t<\ell \leq \lfloor{\mu/2}\rfloor$. We first observe that in order to decrease $d_t$, at least $m_t$ different individuals must be removed from the population. Therefore, it is only possible to achieve this in a single iteration when $m_t = 1$. Let $\mathcal{E}^-$ be the event that in one iteration the algorithm either decreases $m_t$, if $m_t > 1$, or it decreases $d_t$, if $m_t = 1$. Since we assume that all individuals are on the \plateau, the removed individual is selected uniformly at random, hence the probability that $\mathcal{E}^-$ occurs is at most $2m_t/\mu$ (by Lemma~\ref{lem:structural}(1) and~(2)). Let $\mathcal{E}^+$ be the event to increase $m_t$ or to increase $d_t$ in one iteration. By Lemma~\ref{lem:increase}, for each $m_t \in \{1, \ldots , \ell-1\}$ and $d_t \in \{0,2, \ldots , 2r\}$, the probability that $\mathcal{E}^+$ occurs is at least
  \begin{align}
  \label{eq:c(d)}
    & \frac{p_c \cdot \pfu}{4} \cdot \left(\frac{1}{2}\right)^{d_t}  \cdot \frac{\mu-2\ell}{\mu} \geq \frac{p_c \cdot \pfu}{5} \cdot \left(\frac{1}{2}\right)^{d_t} \eqqcolon c(d_t)
  \end{align}
  owing to $2\ell/\mu \leq  \frac{p_c \pfu}{8} \cdot \left(\frac{1}{2}\right)^{2r} \leq 1/5$ (by Eq.~\eqref{eq:ell-und-mu}). 
  
  Now we describe the process of increasing and decreasing $d_t$, as well as evolution of the value $m_t$ if $d_t$ remains unchanged over the time $t$, according to the following Markov chain with states $S_0, S_1,\ldots, S_{\lfloor{\mu/2}\rfloor}$ and $T$. State $T$ will be entered if $d_t$ increases, and state $S_0$ if $d_t$ decreases. Furthermore, $S_i$ for $i \geq 1$ will be entered if $i=m_t$ and $d_t$ does not change. The transition probability from $S_i$ to $S_{i-1}$ is at most $2i/\mu$ and from $S_i$ to $\{S_{i+1},T\}$ is at least $c(d_t)$. The transition probability from $S_i$ to $S_j$ for $|j-i| \geq 2$ is zero. Here, $T$ and $S_0$ are absorbing states. If the current state is $S_i$ for $i \geq 2$, then $d_t$ cannot decrease (since $d_t$ can decrease only in $S_1$). Hence, the aim is to determine the probability to reach $T$ or $S_\ell$ before $S_0$. For $1 \leq i \leq \ell-1$ the probability of reaching $S_{i+1}$ or $T$ before $S_{i-1}$ starting from $S_i$ is given by 
  \begin{align*}
    \Pr(B_i \mid A_i \cup B_i) &= \frac{\Pr(B_i)}{\Pr(A_i \cup B_i)} \geq \frac{\Pr(B_i)}{\Pr(B_i)+ \Pr(A_i)}\\
    &\geq \frac{\Pr(B_i)}{\Pr(B_i)+ 2i/\mu} \geq \frac{c(d_t)}{c(d_t)+2i/\mu} \\
    &\geq \frac{c(d_t)}{c(d_t)+2(\ell-1)/\mu} \eqqcolon p_{\text{u}},
  \end{align*}
  where $A_i$ denotes the event that we reach $S_{i-1}$, and $B_i$ the event to reach $S_{i+1}$ or $T$. Hence, the probability of reaching $S_0$ before $T$ or $S_{\ell}$ can be bounded from above by the probability that an unfair random walk with $\ell+1$ states $Z_0, \ldots , Z_\ell$ with transition probability $p_{\text{u}}$ from $Z_i$ to $Z_{i+1}$ and $p_{\text{d}}=1-p_{\text{u}}$ from $Z_{i+1}$ to $Z_i$ for all $i \in \{0\} \cup [\ell-1]$ reaches $Z_0$ before $Z_\ell$ when starting from $Z_1$. This stochastic process is also known as the \emph{Gambler's Ruin Problem} (Lemma~\ref{lem:ruin}) this probability is

  \begin{align*}
    \frac{(p_{\text{d}}/p_{\text{u}})^{\ell}-(p_{\text{d}}/p_{\text{u}})}{(p_{\text{d}}/p_{\text{u}})^{\ell}-1} = \frac{p_{\text{d}}}{p_{\text{u}}} \cdot \frac{1-(p_{\text{d}}/p_{\text{u}})^{\ell-1}}{1-(p_{\text{d}}/p_{\text{u}})^{\ell}} \leq \frac{p_{\text{d}}}{p_{\text{u}}}.
  \end{align*}
  This concludes the proof of this claim since 
  \begin{align*}
    \frac{p_{\text{d}}}{p_{\text{u}}}&= \frac{2(\ell-1)}{c(d_t) \mu} \leq \frac{p_c \pfu}{8 \cdot 4^{r}} \cdot \frac{1}{\frac{p_c \pfu}{5 \cdot 2^{d_t}}} = \frac{5}{8}\frac{1}{2^{2r-d_t}}.
  \end{align*}
\end{proofofclaim}

The upper bound on the probability that a decrease of $d_t$ occurs before an increase (if $d_t < 2r$) or the creation of the global optimum is as follows.

\begin{restatable}{claim}{claimtwo}
\label{cl:decrease-d-hard}
    Let $1\leq \delta \leq k-r$. Consider an iteration $t$ of the \muoga and suppose that $m_t \geq \ell$. Then the following holds.
    \begin{enumerate}
        \item[(1)] If $0 < d_t < 2r$, the probability to decrease $d_t$ before increasing it or creating the global optimum is at most $19/(10k)$.
        \item[(2)] If $d_t = 2r$ and $\delta = k - r$, the probability to decrease $d_t$ before creating the global optimum is at most $3/4$.
    \end{enumerate}
\end{restatable}

\begin{proofofclaim}
  Let $c(d_t)$ and $p_u$ be as in the proof of Claim~\ref{cl:increase-m}. Denote by $A_t$ the event that $m_t$ or $d_t$ decreases, by $B_t$ that $m_t$ increases (and hence not $d_t$), but the optimum is not created, and by $C_t$ the event that $d_t$ increases or the optimum is created. Further, denote by $\tilde{B}_t$ the event from (the proof of) Lemma~\ref{lem:increase} and by $\tilde{C}_t$ the event from (the proof of) 
  Lemma~\ref{lem:increase-2}(1) if $d_t < 2r$, or Lemma~\ref{lem:increase-2}(2) if $d_t = 2r$. Note that $\tilde{C}_t \subset C_t$ and $\tilde{B}_t \subset B_t \cup C_t$. Further, $\tilde{B}_t$ and $\tilde{C}_t$ are disjoint since in the latter, at least one bit is flipped, whereas in the former, no bits are flipped during mutation. Therefore, we can pessimistically assume that $B_t=\tilde{B}_t$ and $C_t=\tilde{C}_t$ as this only reduces the probability for increasing $d_t$ if $d_t < 2r$ or creating the optimum if $d_t = 2r$ before decreasing $d_t$ and $m_t$ can only change by at most one if $d_t$ does not decrease. For the same reason, we can assume that $\Pr(A_t) = 2m_t/\mu$, $\Pr(B_t) = c(d_t)$ (see Equation~\ref{eq:c(d)}), and $\Pr(C_t) = p_{\text{up}}^*(d_t)$ if $d_t<2r$, while $\Pr(C_t) = p_{\text{opt}}(d_t)$ if $d_t=2r$.

  We can also assume that in each iteration of the \muoga, one of the events $A_t,B_t$ or $C_t$ occurs since otherwise $d_t$ and $m_t$ do not change and we make no progress. 

  At first we derive an upper bound for the probability of decreasing $d_t$ within $s \in \mathbb{N}$ iterations, assuming that $d_t$ does not increase for each $d_t \in \{0,2, \ldots , 2r-2\}$ and the optimum is not created during this time. To this end, we apply the negative drift theorem (see Theorem~\ref{thm:negative-drift}) on $X_t=\ell - m_t$. The target $X_{t+1}=\ell$ is reached when $m_t=1$ and $d_t$ decreases in iteration $t$. Note that $X_0 \leq 0$ and $X_t$ has finite expectation since $0 \leq m_t \leq \mu/2$. We define $T \coloneqq \inf\{t > 0 \mid X_t = \ell\}$ and verify both drift conditions for $X_t$.

  On the one hand, we have $\vert{X_{t+1}-X_t}\vert \leq 1$, since $m_t$ can increase or decrease by at most $1$ in each iteration, as long as the optimum is not reached. On the other hand, for all values of $X_0, \ldots ,X_t$ where the target has not been reached yet under the condition of $A_t \cup B_t$ and $X_t \geq 0$ (i.e. $m_t \leq \ell$) we have
  \begin{align*}
    E[X_{t+1}-X_t] &= \Pr(X_{t+1} = X_t + 1) \\
    &- \Pr(X_{t+1} = X_t - 1)\\
    &= \Pr(\ell - m_{t+1} = \ell - m_t + 1) \\
    &- \Pr(\ell - m_{t+1} = \ell - m_t - 1)\\
    &= \Pr(m_{t+1} = m_t - 1) - \Pr(m_{t+1} = m_t + 1)\\
    &= \frac{P(A_t)}{P(A_t) + P(B_t)} - \frac{P(B_t)}{P(A_t) + P(B_t)}\\
    &= \frac{1}{2m_t/\mu + c(d_t)} \left(\frac{2m_t}{\mu} - c(d_t)\right) \\
    &\leq \frac{1}{2\ell/\mu + c(d_t)} \left(\frac{2 \ell}{\mu} - c(d_t)\right)
    \intertext{With eq.~\eqref{eq:ell-und-mu}, we get by plugging in the right hand side for $\ell / \mu$ and the value for $c(d_t)$ from eq.~\eqref{eq:c(d)} due to $d_t < 2r$}
    &\leq \frac{1}{\frac{p_c \cdot \pfu}{8} \cdot \left(\frac{1}{2}\right)^{2r} + \frac{p_c\cdot \pfu}{5 \cdot 2^{d_t}}}\left(\frac{p_c \cdot \pfu}{8} \cdot \left(\frac{1}{2}\right)^{2r} - \frac{p_c\cdot \pfu}{5 \cdot 2^{d_t}}\right) \\
    &= \frac{1}{5 \cdot 2^{d_t} + 8 \cdot 4^r}\left(5 \cdot 2^{d_t} - 8 \cdot 4^r\right) \\
    &\leq \frac{1}{5 \cdot 2^{d_t} + 8 \cdot 2^{d_t}}\left(5 \cdot 2^{d_t} - 8 \cdot 2^{d_t}\right) \\
    &= \frac{5 - 8}{5 + 8} = -\frac{3}{13} \eqqcolon -\varepsilon < 0.
  \end{align*}
  Finally, we obtain by the negative drift theorem for all $s>0$
  \begin{align}
  \label{eq:negative-drift}
    \text{Pr}(T \leq s) &\leq s\exp\left(-\frac{\ell \varepsilon}{2}\right) = s\exp\left(-\frac{3\ell}{26}\right) \leq s\exp\left(-2\delta\ln(kn)\right).
  \end{align}
  where $\ell=\lfloor{20\delta \ln(kn)}\rfloor$ for an auxiliary $1 \leq \delta \leq k-r$. 
  Now we verify the two single claims stated in the lemma.

  (1): Recall that by our assumption
  \begin{align*}
    \Pr(C_t) = p_{\text{up}}^*(d_t) = p_c \cdot \pfu \cdot \left(\frac{1}{2}\right)^{d_t} \cdot \frac{1}{3en}.
  \end{align*}

  Then the probability that $d_t$ increases (see eq.~\eqref{eq:ell-und-mu} and~\eqref{eq:c(d)}) is at least
  \begin{align*}
    &\frac{\Pr(C_t)}{\Pr(A_t)+\Pr(B_t) + \Pr(C_t)} \geq \frac{p_{\text{up}}^*(d_t)}{2 \ell/\mu + c(d_t)+p_{\text{up}}^*(d_t)} \\
    & \geq \frac{\frac{p_c \cdot \pfu}{3en} \cdot \left(\frac{1}{2}\right)^{d_t}}{\frac{p_c \cdot \pfu}{8}\left(\frac{1}{2}\right)^{2r} + \frac{p_c \cdot \pfu}{5} \cdot \left(\frac{1}{2}\right)^{d_t}  + \frac{p_c \cdot \pfu}{3en} \cdot \left(\frac{1}{2}\right)^{d_t}} \\
    & \geq \frac{1}{\frac{3e n 2^{d_t}}{8} \cdot \left(\frac{1}{2}\right)^{2r} + \frac{3e n}{5}  + 1} \geq \frac{1}{\frac{3e n}{8} + \frac{3e n}{5} + \frac{3n}{3}} \\
    &\geq \frac{1}{4n} \eqqcolon \sigma(n) \eqqcolon \sigma,
  \end{align*}
  since by our assumption, in each iteration the event $D_t := A_t \cup B_t \cup C_t$ occurs. Denote by $\tilde{T}$ the number of iterations until $d_t$ increases. Then the probability to increase $d_t$ in $s$ iterations (which means $\tilde{T}\leq s$) if $d_t$ does not decrease (which means $T >s$) is the probability that the event $C_t$ does occur at least one time in $s$ iterations under the condition that $T>s$ holds. Hence we obtain for $s \in \mathbb{N}$
  \begin{align*}
    &\text{Pr}(\tilde{T} \leq s \mid T > s) \geq 1-\left(1-\sigma\right)^s \geq \frac{s \cdot \sigma}{1+s \cdot \sigma} \geq 1-\frac{1}{s \cdot \sigma}
  \end{align*}
  where the second inequality is due to Lemma~\ref{lem:amplification}. By letting $s=20k/(19\sigma) = 80kn/19$ we obtain $\text{Pr}(\tilde{T} \leq s \mid T > s) \geq 1-\frac{19}{20k}$. On the other hand we have by eq.~\eqref{eq:negative-drift} where we plug in $\delta=1$ and $s=80kn/19$

  \begin{align*}
    \text{Pr}(T \leq s) \leq s \exp(-2 \delta \ln(kn)) = \frac{80kn}{19k^2n^2} \leq \frac{19}{20k}
  \end{align*}
  for $n$ sufficiently large. Hence, the probability to increase $d_t$ within the next $s=80kn/19$ iterations while not decreasing $d_t$ is
  \begin{align*}
    \Pr\Big((\tilde{T} \leq s) \cap (T > s)\Big) &= \Pr(\tilde{T} \leq s \mid T > s) \cdot P(T > s) \\
    &\geq \left(1-\frac{19}{20k}\right)\left(1-\frac{19}{20k}\right) \geq 1-\frac{19}{10k},
  \end{align*}
  where we used the Weierstrass product inequality $(1-x)(1-y) \geq 1-x-y$ for $x,y \geq 0$. 

  (2): Recall that by Lemma~\ref{lem:increase-2}(2) due to $d_t=2r$
  \begin{align*}
    p_{\text{opt}}(2r) = p_c \cdot \pfu \cdot \left(\frac{1}{2}\right)^{2r} \frac{1}{en^{k-r}}
  \end{align*}
  and therefore the probability to create $1^n$ is at least (conditioned on $D_t$),
  \begin{align*}
    &\frac{\Pr(C_t)}{\Pr(A_t)+\Pr(B_t) + \Pr(C_t)} \geq \frac{p_{\text{opt}}(2r)}{2 \ell/\mu + c(2r)+p_{\text{opt}}(2r)} \\
    & \geq \frac{\frac{p_c \cdot \pfu}{en^{k-r}} \cdot \left(\frac{1}{2}\right)^{2r}}{\frac{p_c \cdot \pfu}{8} \cdot \left(\frac{1}{2}\right)^{2r} + \frac{p_c \cdot \pfu}{5} \cdot \left(\frac{1}{2}\right)^{2r}  + \frac{p_c \cdot \pfu}{en^{k-r}} \cdot \left(\frac{1}{2}\right)^{2r}} \\
    & \geq \frac{1}{\frac{e n^{k-r}}{8} + \frac{e n^{k-r}}{5} + n^{k-r}} \geq \frac{1}{2n^{k-r}} \eqqcolon \tau.
  \end{align*}
  Denote by $\hat{T}$ the time until $1^n$ is created. Then we obtain for $s \in \mathbb{N}$ iterations with a similar argument as above
  \begin{align*}
    \text{Pr}&(\hat{T} \leq s \mid T > s) \geq 1-\left(1-\tau\right)^s \geq \frac{\tau s}{1+\tau s} \geq 1-\frac{1}{\tau s}.
  \end{align*}
  For $s =4n^{k-r}$ we obtain $\text{Pr}(\hat{T} \leq s \mid T > s) \geq 1/2$. Further, by eq.~\ref{eq:negative-drift} and pluggin in $\delta = k-r$ and $s=4n^{k-r}$,
  \begin{align*}
    \text{Pr}(T \leq s) &\leq s \exp(-2\delta \ln (kn)) \leq s \exp(-2\delta \ln (n)) \\
    &= s \cdot n^{-2\delta} = 4n^{k-r}n^{-2(k-r)} \leq 1/2
  \end{align*}
  for $n$ sufficiently large due to $k>r$. With a similar argument as above, the probability of creating the optimum within the next $s=4 n^{k-r}$ iterations before decreasing $d_t$ is at least
  \begin{align*}
    \Pr\Big((\tilde{T} \leq s) &\cap (T > s)\Big) = \Pr(\tilde{T} \leq s \mid T > s) \cdot P(T > s) \\
    &\geq (1-1/2)(1-1/2) = 1/4=1-3/4,
  \end{align*}
  which concludes the proof of this claim.
\end{proofofclaim}

For a fixed but arbitrary $d_t = d$, by Claim~\ref{cl:increase-m} we either increase $m_t$ to $\ell$, increase $d_t$, or find the global optimum before $d_t$ decreases with probability at least $1-\frac{5}{8}\frac{1}{2^{2r-d}}$. Conditional on this event and pessimistically assuming that we neither increased $d_t$, nor we found the optimum (that is, we still have $d_t = d$ and $m_t \ge \ell$), we apply Claim~\ref{cl:decrease-d-hard} distinguishing two cases. If $0 < d < 2r$, then we increase $d_t$ or find the global optimum before we decrease $d_t$ with probability at least $1 - \frac{19}{10k}$. If $d = 2r$, then we find the global optimum before decreasing $d_t$ with probability $1 - \frac{3}{4} = \frac{1}{4}$. Combining the probabilities of these events for all $d \in \{2, 4, \dots, 2r\}$ and recalling that $r \le k - 1$, we conclude that the probability that $d_t$ never decreases before we find the optimum is at least
\begin{align*}
&\left(1-\frac{19}{10k}\right)^{r-1} \cdot \frac{1}{4} \cdot \prod\nolimits_{j=1}^r \left(1-\frac{5}{8}\frac{1}{2^{2r-2j}}\right)\\
&\geq \left(1-\frac{19}{10k}\right)^{k-2} \cdot \frac{1}{4} \cdot  \prod\nolimits_{j=1}^r \left(1-\frac{5}{8}\frac{1}{2^{2r-2j}}\right)\\
&\geq \left(1-\frac{19}{10k}\right)^{k-1.9} \cdot \frac{1}{4} \cdot  \left(1-\frac{5}{8} \sum\nolimits_{j=0}^\infty \frac{1}{4^j}\right)\\
&\geq \frac{1}{e^{1.9}} \cdot \frac{1}{4} \cdot  \left(1-\frac{5}{8} \frac{1}{1-1/4}\right) = \frac{1}{e^{1.9}} \cdot \frac{1}{4} \cdot  \frac{1}{6} \in \Omega(1),
\end{align*}
where we used the Weierstrass product inequality $\prod\nolimits_{j=1}^{m} (1-x_j) \geq 1-x_1 - \ldots - x_m$ for $m \in \mathbb{N}$, $x_i \in [0,1]$ and that for $0<c<k$ we have
$$\left(1-\frac{c}{k}\right)^{k-c} = \left(\left(1-\frac{c}{k}\right)^{k/c-1}\right)^c  \geq 1/e^c$$ 
owing to $(1-1/x)^{x-1} \geq 1/e$ for $x>1$. 

Denote the decrease of $d_t$ as a \emph{failure}. In the following we condition on the event that no failure occurs which happens with probability at least $\Omega(1)$ and we estimate the expected time until the optimum is found. To succeed, one may increase $d_t$ at most $r$ times if $d_t<2r$ (resulting in a value of $d_t=2r$) and then derive the all one string. By Lemma~\ref{lem:increase-2}(1), an increase of $d_t$ happens with probability at least
\begin{align*}
p_c \cdot \pfu \cdot \frac{1}{3en \cdot 2^{d_t}}\left(1-\frac{1}{\mu}\right) \geq p_c \cdot \pfu \cdot \frac{1}{6en \cdot 2^{d_t}}.
\end{align*}
Hence, the total expected waiting time (conditioned on the event that no failure occurs) until there is a pair of individuals with distance $2r$ is at most a partial sum of the geometric series
\begin{align*}
\sum_{i=0}^{r-1} \frac{6en 2^{2i}}{p_c \pfu} &\leq \frac{6en}{p_c \pfu} \sum_{i=0}^{r-1} 4^{i} = O\left(\frac{4^rn}{p_c \pfu}\right).
\end{align*}
When $d_t \ge 2r$, the optimum is created with probability at least
\begin{align*}
p_c \cdot \pfu \cdot \frac{1}{4^r} \frac{1}{en^{k-r}}.
\end{align*}
Then the expected waiting time to generate the all-ones string is at most $O(n^{k-r}4^r/(p_c \pfu))$. Therefore, the expected waiting time in total is at most 
\begin{align*}
O\left(\frac{4^rn}{p_c \pfu}+\frac{n^{k-r}4^r}{p_c \pfu}\right) = O\left(\frac{n^{k-r}4^r}{p_c \pfu}\right).
\end{align*}
If a failure occurs (i.e. $d_t$ decreases during the run), we repeat the above arguments and need another period of $O(n^{k-r}4^r/(p_c \pfu))$ iterations in expectation if no further failure occurs. Hence, the bound on the expected number of iterations follows by multiplying the above with $O(1)$, noting that in expectation, $O(1)$ repetitions are sufficient.
\end{proof}

Note that the population size in Theorem~\ref{thm:k-general} depends only logarithmically on $n$, but exponentially on $k$. If $k=O(1)$ and $p_c \pfu \in \Omega(1)$, a population size of $O(\log(n))$ suffices to escape the \plateau in expected $O(n)$ iterations. Even for $k = \omega(1)$, $p_c \pfu \in \Omega(1)$ and $\mu=c \cdot 4^k \log(n)$ we get a runtime bound of $O(4^k \cdot n)$ to create $1^n$. Moreover, it is only by a factor of $n$ worse than the bound $\Omega(4^k)$, which is the best possible lower bound achievable by a broad class of $(\mu+1)$ GAs to escape the \plateau (namely, by elitist algorithms that use any type of parent selection and perform either standard bit mutation alone or uniform crossover followed by standard bit mutation in a single iteration~\cite[Theorem~5.5]{Opris2025Jump})

\section{Furthest Parent Selection Is Also Helpful at Hill Climbing}
\label{sec:hill-climb}

In this section, we show that in addition to diversifying the population of the \muoga, furthest parent selection also helps at optimizing easy monotonic problems. For this, we estimate the runtime of the algorithm on \jump until it gets the whole population on the \plateau starting from a random population (which is the most common way to initialize the algorithm).

\begin{restatable}{lemma}{timetoplateau}
\label{lem:time-to-plateau}
     Consider the \muoga with furthest parent selection on \jump with $2 \leq k \leq n/2$ and crossover probability $0 < p_c \leq 1$. Then the expected number of fitness evaluations made by the algorithm before the whole population is on the \plateau is 
     $O\left(\frac{n \mu}{p_c\pfu} + n \log(\frac{n}{k}) \right).$
\end{restatable}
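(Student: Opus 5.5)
The plan is a fitness-level argument on the \emph{worst} fitness in the population, tracking the number of individuals that currently have this worst fitness. The furthest parent selection enters through a simple observation: uniform crossover of two distinct parents has a constant probability of producing an offspring strictly better than the worse of the two parents.

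\textbf{Setup.} First I dispose of the initialization. Since $k \le n/2$, a Chernoff bound shows that with probability $1-2^{-\Omega(n)}$ no initial individual has $n-k<\ones{x}<n$, i.e., no individual lies in the gap. In the exponentially unlikely complementary event I only need a crude polynomial bound: gap points have fitness below $k$, so they are worse than every non-gap point, and they are removed as soon as better points are generated. This contributes $o(1)$ to the expectation. So assume no individual is ever in the gap. Every accepted offspring is then at least as good as the current worst, and on $\{x : \ones{x}\le n-k\}$ the fitness is $\ones{x}+k$. Let $v_t$ be the minimum number of ones in $P_t$ and $N_t$ the number of individuals attaining it. The pair $(v_t,-N_t)$ never gets worse (lexicographically): an offspring with more than $v_t$ ones replaces a worst individual, so $N_t$ drops by one, and once $N_t$ reaches $0$ the level $v_t$ increases.

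\textbf{Case $d_t>0$.} With probability $p_c\pfu$ the algorithm crosses over a pair $w,z$ with $H(w,z)=d_t>0$. Let $I$ be the set of positions where they differ, and say $w$ is the parent with fewer ones, having $a\le d_t/2$ ones in $I$. Outside $I$ the offspring agrees with both parents, and inside $I$ its number of ones is $\Bin(d_t,1/2)$. By Lemma~\ref{lem:binomial} it exceeds $d_t/2\ge a$ with probability at least $\frac12(1-1/\sqrt{\pi})$. In that case the offspring has strictly more ones than $w$, hence more than $v_t$, and at most $\max(\ones{w},\ones{z})\le n-k$ ones, so it is not in the gap. Mutation changes nothing with probability at least $1/4$. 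Altogether, with probability $\Omega(p_c\pfu)$ the offspring is accepted and $N_t$ decreases.

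\textbf{Case $d_t=0$.} Then all individuals are equal with $j=v_t$ ones. Both a crossover step (whose offspring is just a copy) and a mutation step improve the individual by a one-bit flip with probability at least $(n-k-j)/(en)$. Once this happens, $d_t>0$ and the first case applies.

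\textbf{Summing up.} Each level $j$ is visited once and needs at most $\mu$ decrements of $N_t$. While $d_t>0$ each decrement costs $O(1/(p_c\pfu))$ expected iterations. The $d_t=0$ phases cost $O(n/(n-k-j))$ expected iterations per level. Summing over $j\le n-k$ gives
\[
O\left(\frac{n\mu}{p_c\pfu}\right)+\sum_{j=0}^{n-k-1}\frac{en}{n-k-j}=O\left(\frac{n\mu}{p_c\pfu}+n\log\frac{n}{k}\right).
\]
The additive term is only $O(n\log(n/k))$ if I charge it per distinct value reached by the worst individual rather than per level. Still, summing the harmonic terms over all $j$ keeps it within the claimed bound, since $\sum_{i=k}^{n} n/i=O(n\log(n/k))$ (interpreting $n\log(n/k)$ as $n\log(n/k)+n$ when $k$ is close to $n/2$).

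\textbf{Main obstacle.} The delicate point is the $d_t=0$ interplay. When $d_t=0$ the crossover estimate is useless, and an improvement obtained by mutation can later be lost. That happens only if the improved individual is removed, but removals only hit individuals of worst fitness, so it cannot be removed while the min level is lower. This has to be checked carefully, and one must make sure the mutation-based improvement is charged only once per level rather than once per decrement of $N_t$.
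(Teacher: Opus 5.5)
Your argument breaks at the claim that an offspring with more than $d_t/2$ ones in $I$ has ``at most $\max(\ones{w},\ones{z})\le n-k$ ones''. Uniform crossover sets the positions in $I$ independently. So the offspring has $s+X$ ones, where $s$ is the parents' common number of ones outside $I$ and $X\sim\Bin(d_t,\frac12)$ ranges up to $d_t$. By contrast, $z$ has only $d_t-a$ ones in $I$. Whenever $a>0$, the offspring can have more ones than both parents and overshoot into the gap, where its fitness is below $k$ and it is rejected.

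Near the plateau this is the dominant effect, not a lower-order one. If $\ones{w}=n-k-1$ and $z$ is on the plateau, an improvement essentially requires crossover to produce exactly $n-k$ ones, which has probability $\Theta(1/\sqrt{d_t})$. Worse, the furthest pair need not contain a worst individual. Take two plateau points with disjoint zero sets (distance $2k$). Add copies of a point with $n-k-1$ ones whose $k+1$ zeros are split as evenly as possible between those two zero sets, so it is at distance at most $k+1$ from each. Then the furthest pair is the plateau pair, and your event ``more ones than $w$'' means overshooting the plateau. A crossover of this pair improves on the worst level only with probability $\Theta(1/\sqrt{k})$, even counting the subsequent mutation. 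So furthest-pair crossover does not give a uniform $\Omega(p_c\pfu)$ bound per decrement of $N_t$. The $d_t=0$ interplay you single out as the main obstacle is actually the easy part (the paper's Case~1).

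The missing ingredient is control of this overshoot, which is what Lemma~\ref{lem:better-offspring} supplies. If the worse parent lies at least $8\sqrt{k}$ below the plateau, the parents' distance is at most twice its number of zero-bits, and a Chernoff bound shows that overshooting has probability at most $e^{-8/3}$, leaving a constant. Within $8\sqrt{k}$ of the plateau, only $\Omega(1/\sqrt{k})$ is available, via Lemma~\ref{lem:binom-lower}. Statement~(3) handles a furthest pair lying well above the worst level. Since only $O(\sqrt{k})$ levels are close to the plateau, they cost $O(\sqrt{k}\cdot\mu\cdot\sqrt{k}/(p_c\pfu))=O(k\mu/(p_c\pfu))$. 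So the bound survives, but only with this extra analysis.

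Within your own framework, a more uniform repair is available. Success means $X$ hits a window of $G=n-k-v_t$ consecutive integers that straddles $d_t/2$: its lower end is at most $\ones{w}-s=a\le d_t/2$, and its upper end is at least $\ones{z}-s\ge d_t/2$. Moreover $d_t\le 2\zeros{w}\le 2(k+G)$. Lemma~\ref{lem:binom-lower} then gives success probability $\Omega(\min\{1,G/\sqrt{k+G}\})$, whose reciprocals sum to $O(n)$ over all levels.

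Two smaller points. First, for $k$ close to $n/2$ a uniform point lies in the gap with constant probability (about $\frac12$ when $k=n/2$). So your initialization step fails in part of the stated range, and the ``crude polynomial bound'' is never derived. It is simpler to keep the valley levels in the fitness-level argument as the paper does, since there overshooting is harmless. Second, in the $d_t=0$ phase any of the $n-j$ zero-bits may be flipped, giving probability $(n-j)/(en)$. With your $(n-k-j)/(en)$ the displayed sum is $en\sum_{i=1}^{n-k}\frac1i=\Theta(n\log n)$, which is not $O(n\log(n/k))$ when $k=\Theta(n)$.
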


The proof of this lemma is based on the following lemma that estimates the probabilities of creating a crossover offspring that is better than the worst of its parents.
\begin{restatable}{lemma}{betteroffspring}
\label{lem:better-offspring}
    Let $f$ denote the \jump function and let $x$ and $y$ be two arbitrary different individuals such that $f(y) \le f(x)$, $f(x) < n + k$ ($x$ is not the global optimum) and $f(y) < n$ ($y$ is not a local optimum). Let $z$ be the result of uniform crossover between $x$ and $y$. There exist constants $c_1, c_2, c_3>0$ that do not depend on $x$ and $y$ such that
    \begin{enumerate}
        \item[(1)] if $f(y) \le n - 8\sqrt{k}$, then $\Pr[f(z) > f(y)] \ge c_1$;
        \item[(2)] if $f(y) > n - 8\sqrt{k}$, then $\Pr[f(z) > f(y)] \ge \frac{c_2}{\sqrt{k}}$;
        \item[(3)] $\Pr[f(z) > f(y) - \sqrt{k}] \ge c_3$, even if $f(y) = n$.
    \end{enumerate}
\end{restatable}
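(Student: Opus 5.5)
Let $I$ be the set of positions where $x$ and $y$ differ, $D \coloneqq |I| \ge 1$, and $J \coloneqq [n]\setminus I$. On $J$ the offspring $z$ agrees with both parents. On $I$ each bit of $z$ is an independent fair coin. Let $a \coloneqq \ones{y}^I$ and $b \coloneqq \ones{x}^I = D - a$. Then $\ones{z} = \ones{y} - a + X$ with $X \sim \Bin(D,\tfrac12)$, so $\ones{z} - \ones{y} = X - a$.

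The plan is to reduce all three statements to tail bounds on $X - a$ via the shape of \jump. On the slope ($\ones{\cdot} \le n-k$) the fitness increases in the number of ones. In the gap ($n-k < \ones{\cdot} < n$) it decreases. I would first do a case analysis on where $y$ and $x$ lie. The hypothesis $f(y) < n$ means $y$ is either on the slope with $\ones{y} < n-k$ or in the gap, and $f(y) \le f(x)$ restricts $x$.

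\textbf{Case A: $y$ on the slope.} Here $f(y) = \ones{y} + k$, and any $z$ with $\ones{y} < \ones{z} \le n-k$ is strictly better. Since $f(x) \ge f(y)$, either $x$ is on the slope with $\ones{x} \ge \ones{y}$, i.e.\ $b \ge a$, or $x$ is in the gap. In the first subcase, $\Pr[X > a] \ge \Pr[X > D/2]$, which is $\ge \frac12(1-\sqrt{1/\pi})$ by Lemma~\ref{lem:binomial}. We must however also avoid overshooting into the gap, since $\ones{z}$ may exceed $n-k$. The room available is $n-k-\ones{y} = n - f(y)$. If $f(y) \le n - 8\sqrt{k}$, this room is at least $8\sqrt{k}$, and I would show that $X - a \in (0, 8\sqrt{k}]$ with constant probability. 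When $D = O(k)$ this follows because $X - D/2$ concentrates within $O(\sqrt{D})$, and Lemma~\ref{lem:binom-lower} lower-bounds the mass of the window between $D/2$ and $D/2 + \gamma\sqrt{D}$. The harder situation is $D$ large, when the window $(a, a+8\sqrt{k}]$ may be far narrower than $\sqrt{D}$. Then I would use that $\ones{x} \le n$ and $\ones{y} \le n-k$ force the imbalance to be controlled, or pick a window of width $\min(8\sqrt{k},\sqrt{D})$ centred just above $a$. Each value there has probability $\Omega(1/\sqrt{D})$, giving $\Omega(\min(\sqrt{k},\sqrt{D})/\sqrt{D})$. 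This is where I expect the \textbf{main obstacle}: large $D$ against a small window.

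I would try to handle it by noting that when $y$ is near the boundary the statement only claims $c_2/\sqrt{k}$, obtained from a single point $X = a+1$ with $\Pr \ge \binom{D}{a+1}2^{-D}$. This requires $a$ within $O(\sqrt{D})$ of $D/2$, which the fitness constraints should give since both parents have at least $n-k-O(\sqrt k)$ ones. I may still have to redistribute between cases (1) and (2) to make this work. For the second subcase, $x$ in the gap, $x$ has more ones than $y$ and again $b > a$, so the same argument applies.

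\textbf{Case B: $y$ in the gap.} Here $f(y) = n - \ones{y} > n - k$, so we are in regime (2) or (3). Improving means decreasing the number of ones while staying above $n-k$, or jumping to the slope with more than $f(y) - k$ ones. Then $f(x) \ge f(y)$ means $x$ has fewer ones in the gap, or lies on the slope. Either way $b < a$ and by symmetry $X < a$ with constant probability. The step $X = a-1$ costs one point probability $\Omega(1/\sqrt{D})$ provided $D = O(k)$, which holds since both parents are within $O(k)$ of the plateau in ones. This gives bound (2). For (3), $f(z) > f(y) - \sqrt{k}$ follows if $|X - a| < \sqrt{k}$ in the favourable direction, or from landing on the slope high enough. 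Chebyshev on $X$ with $D = O(k)$ gives the constant $c_3$, and the case $f(y) = n$ (a local optimum $y$) is treated identically. Finally, I would collect the constants from Lemma~\ref{lem:binomial} and Lemma~\ref{lem:binom-lower}.
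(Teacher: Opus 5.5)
Your proposal has genuine gaps. The first is a misreading of the landscape in Case~B. For $n-k<|y|_1<n$ we have $f(y)=n-|y|_1\le k-1$, so the gap carries the \emph{lowest} fitness values, not values above $n-k$. If $y$ is in the gap, then $|x|_1\le|y|_1$ (so $b\le a$). Every $z$ with fewer ones than $y$ is then strictly better, since every slope point has fitness at least $k$. Statement~(1) follows at once from $\Pr[X<a]\ge\Pr[X<D/2]\ge\frac12(1-\sqrt{1/\pi})$ by Lemma~\ref{lem:binomial}. Your concern about ``staying above $n-k$'' is misplaced, and your claim $D=O(k)$ is false ($x$ may be any slope point, e.g.\ $0^n$); neither is needed. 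Symmetrically, if $y$ is on the slope then $x$ cannot lie in the gap. So statements (2) and (3), including the case $f(y)=n$, concern only slope points $y$ near the plateau, and you never treat them there.

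In Case~A, the obstacle you flag is not resolved. A window of width $\min\{8\sqrt k,\sqrt D\}$ gives only $\Omega(\sqrt{k/D})$, not a constant. Worse, the window $(a,a+8\sqrt k]$ can miss the bulk of $\Bin(D,\frac12)$ entirely (e.g.\ $a=0$, $D=b\gg\sqrt k$). The missing idea is that $D$ and the available room are coupled:
\begin{itemize}
\item Every position where $x$ and $y$ differ is a zero of one of them, so $D\le|x|_0+|y|_0\le 2|y|_0$.
\item Since $x$ is not optimal, $|x|_1\le n-k$ (you used only $|x|_1\le n$). Hence the mean $(|x|_1+|y|_1)/2$ of $|z|_1$ lies at least $(|y|_0-k)/2$ below $n-k$.
\end{itemize}
The room is therefore $|y|_0-k$, not $8\sqrt k$. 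For $|y|_0\ge k+8\sqrt k$ and $k\ge 100$ this gives $(|y|_0-k)/2\ge 2\sqrt{2|y|_0}\ge 2\sqrt D$. Overshooting into the gap then requires $X>D/2+2\sqrt D$, which a Chernoff bound caps at $e^{-8/3}$. Subtracting this from $\Pr[X>D/2]$ yields $c_1$; for $k<100$ the remaining $y$ have $D=O(1)$.

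For (2), the point $X=a+1$ is the wrong target. The offset $|a-D/2|=(|x|_1-|y|_1)/2$ may be of order $\sqrt k\gg\sqrt D$. Use $X=\lfloor D/2\rfloor+1$ instead. It still gives $|y|_1<|z|_1\le\max\{|x|_1,|y|_1+1\}\le n-k$. It has probability $\Omega(1/\sqrt D)=\Omega(1/\sqrt k)$ by Lemma~\ref{lem:binom-lower}, because $D\le 2|y|_0\le 18k$.

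For (3) near the plateau, Chebyshev alone cannot lower-bound the mass of a window of width $\sqrt k$ when the standard deviation can be $\sqrt{18k}/2$. Instead, sum the point bounds of Lemma~\ref{lem:binom-lower} over the $\Theta(\sqrt D)$ values of $X$ just below $\lceil D/2\rceil$. All of these give $|z|_1\in(|y|_1-\sqrt k,\,n-k]$, and this covers $f(y)=n$ as well.
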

\begin{proof}
    We distinguish several cases depending on the fitness of $y$. First, we consider $y$ with the worst fitness values, when it is in the fitness valley between local optima and the global optimum. In the second case, $y$ is on the main slope of \jump, but not too close to the local optima, namely, in distance more than $8\sqrt{k}$ from any of them. Finally, we consider the case when $y$ is close to local optima. These cases are illustrated in Figure~\ref{fig:jump-cases}. Note that the first two cases cover statement (1) of the lemma. The last case covers statement (2). And statement (3) will be covered by all three cases together.
    
    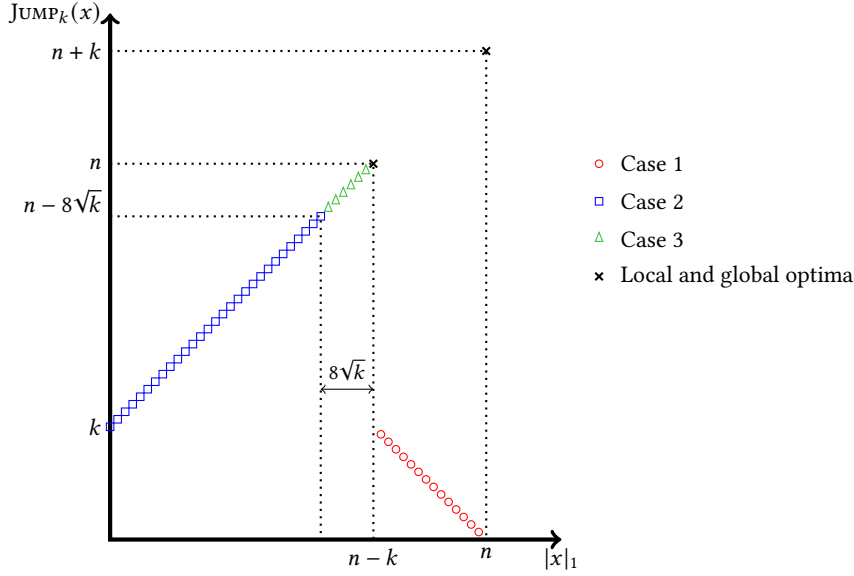
\begin{figure*}[t!]
        \begin{center}
            \begin{tikzpicture}
                \draw[<->,ultra thick] (6, 0) -- (0, 0) -- (0, 7);

                \node [below] at (6, 0) {$|x|_1$};
                \node [left] at (0, 7) {$\jump(x)$};
                \node [below] at (5, 0) {$n$};
                \node [below] at (3.5, 0) {$n - k$};
                \node [left] at (0, 1.5) {$k$};
                \node [left] at (0, 5) {$n$};
                \node [left] at (0, 4.5) {$n - 8\sqrt{k}$};
                \node [left] at (0, 6.5) {$n + k$};

                \draw [dotted, thick] (0, 5) -- (3.5, 5) -- (3.5, 0);
                \draw [dotted, thick] (0, 4.3) -- (2.8, 4.3) -- (2.8, 0);
                \draw [<->] (2.8, 2) -- node [above, pos=0.5] {\small{$8\sqrt{k}$}} (3.5, 2);
                \draw [dotted, thick] (0, 6.5) -- (5, 6.5) -- (5, 0);

                \foreach \i in {1,...,14} {
                    \node [draw, red, shape=circle, fill=none, minimum size=0.1cm, inner sep=0pt] at (5 - 0.1 * \i, 0.1 * \i){};
                };
                \foreach \i in {0,...,28} {
                    \node [draw, blue, shape=rectangle, fill=none, minimum size=0.1cm, inner sep=0pt] at (0.1 * \i, 0.1 * \i + 1.5){};
                };
                \foreach \i in {29,...,34} {
                    \node [draw, green!50!gray, isosceles triangle, rotate=90, fill=none, minimum size=0.1cm, inner sep=0pt] at (0.1 * \i, 0.1 * \i + 1.5){};
                };
                \draw [thick] (3.45, 4.95) -- (3.55, 5.05);
                \draw [thick] (3.55, 4.95) -- (3.45, 5.05);
                \draw [thick] (4.95, 6.45) -- (5.05, 6.55);
                \draw [thick] (5.05, 6.45) -- (4.95, 6.55);

                \node [draw, red, shape=circle, fill=none, minimum size=0.1cm, inner sep=0pt] at (6.5, 5){};
                \node [draw, blue, shape=rectangle, fill=none, minimum size=0.1cm, inner sep=0pt] at (6.5, 4.5){};
                \node [draw, green!50!gray, isosceles triangle, rotate=90, fill=none, minimum size=0.1cm, inner sep=0pt] at (6.5, 4){};
                \draw [thick] (6.45, 3.45) -- (6.55, 3.55);
                \draw [thick] (6.55, 3.45) -- (6.45, 3.55);

                \node [right=5pt] at (6.5, 5) {Case 1};
                \node [right=5pt] at (6.5, 4.5) {Case 2};
                \node [right=5pt] at (6.5, 4) {Case 3};
                \node [right=5pt] at (6.5, 3.5) {Local and global optima};

            \end{tikzpicture}
        \end{center}
        \caption{Illustration of the fitness values that correspond to different cases considered in the proof of Lemma~\ref{lem:better-offspring}.}
        \label{fig:jump-cases}
    \end{figure*}

    \textbf{Case 1: $y$ in the valley.} In this case, $x$ has at least as many zero-bits as $y$ (since $f(x) \ge f(y)$), and to create a strictly better offspring $z$, we need $z$ to have strictly more zero-bits than $y$. Let $D$ be the set of positions in which $x$ and $y$ differ (since $x \ne y$, $D \ne \emptyset$), and let $\Delta = |D| > 0$. The crossover offspring $z$ coincides with $y$ (and $x$) in all positions from $[n] \setminus D$, and in each position in $D$ it has either zero or one with equal probability (independently from other positions). Since $f(x) \ge f(y)$, there are at most $\frac{\Delta}{2}$ zero-bits in $y$ in positions in $D$ (that is, $|y|_0^D \le \frac{\Delta}{2}$). Hence, if in $z$ more than half of bits in positions in $D$ are zeros (that is, $|z|_0^D > \frac{\Delta}{2}$), then $z$ has a strictly better fitness than $y$. Since $|z|_0^D$ follows a binomial distribution $\Bin(\Delta, \frac{1}{2})$, by Lemma~\ref{lem:binomial} we have
    \begin{align*}
        \Pr[|z|_0^D > \frac{\Delta}{2}] \ge \frac{1}{2}\left(1 - \sqrt{\frac{1}{\pi}}\right) \eqqcolon c_1',
    \end{align*}
    which is a candidate to be $c_1$ from the first lemma statement. Since $\Pr[f(z) > f(y) - \sqrt{k}] \ge \Pr[f(z) > f(y)]$, the same lower bound on this probability applies, hence $c_1'$ is also a candidate for $c_3$ from the third statement.

    \textbf{Case 2: $y$ is in the main slope, but not too close to the local optima.} More precisely, we assume that $|y|_1 \le n - k - 8\sqrt{k}$. In this case, $x$ has at least as many one-bits as $y$ (contrary to the previous case, since the slope points in the opposite direction). To create a better offspring $z$, we need to have strictly more one-bits than in $y$, but we also need to have $|z|_1 \le n - k$, since otherwise $z$ is in the fitness valley, and therefore its fitness is worse than the fitness of $y$ (note that we pessimistically disregard an unlikely event of $z$ being the global optimum). Therefore, the probability of $z$ being strictly better than $y$ is at least the probability of $z$ having more one-bits than $y$ minus the probability of $z$ having more than $n - k$ one-bits. The former can be bounded similar to the previous case, but swapping zero-bits and one-bits in all arguments, hence this probability is at least $c_1'$.

    We now bound the probability of $z$ having more than $n - k$ one-bits. We do it in two steps. First, we show that this is a sub-event of crossover taking no more than $\frac{\Delta}{2} + 2\sqrt{\Delta}$ one-bits in positions in $D$. Then, we bound the probability of this sub-event via Chernoff bounds.

    Let $\bar D$ be a complement of $D$ to $[n]$, that is, $\bar D = [n] \setminus D$. Note that $|x|_1^{\bar D} = |y|_1^{\bar D} = |z|_1^{\bar D}$ and that $|x|_1^D = \Delta - |y|_1^D$, hence we have
    \begin{align*}
        |x|_1 = |x|_1^D + |x|_1^{\bar D} = \Delta - |y|_1^D + |y|_1^{\bar D} = |y|_1 - 2|y|_1^D + \Delta,
    \end{align*}
    therefore,
    \begin{align*}
        \frac{|x|_1 + |y|_1}{2} = |y|_1 - |y|_1^D + \frac{\Delta}{2} = |y|_1^{\bar D} + \frac{\Delta}{2} = |z|_1^{\bar D} + \frac{\Delta}{2}.
    \end{align*}
    Using this, we can write
    \begin{align}
        \label{eq:z1}
        |z|_1 = |z|_1^D + |z|_1^{\bar D} = |z|_1^D + \frac{|x|_1 + |y|_1}{2} - \frac{\Delta}{2}.
    \end{align}
    Note that $x$ is not the global optimum, hence $|x|_1 \le n - k$. Also note that $\Delta$ is at most $|y|_0 + |x|_0 \le 2|y|_0$ (since in positions that are not zero in $y$ and that are not zero in $x$ they both have a one-bit, hence this position is not in $D$). Then, if we assume that $|z|_1^D \le \frac{\Delta}{2} + 2\sqrt{\Delta}$, then we have
    \begin{align}
        \label{eq:checkpoint}
        \begin{split}
            |z|_1 &\leq \frac{n - k + |y|_1}{2} + 2\sqrt{\Delta} \\
            &\le (n - k) - \frac{(n - k) - |y|_1}{2} + 2\sqrt{2|y|_0} \\
            &= (n - k) - \frac{|y|_0 - k}{2} + 2\sqrt{2|y|_0}.
        \end{split}
    \end{align}
    By solving a quadratic inequality, one can see that $\frac{|y|_0 - k}{2} \ge 2\sqrt{2|y|_0}$ if $|y|_0 \geq k+16 + 4\sqrt{2k+16}$. For $k \geq 100$ we have that $k+16 + 4\sqrt{2k+16} \geq k + 8\sqrt{k}$ and therefore $|y|_0 \geq k + 8\sqrt{k}$. Hence, from $|z|_1^D \le \frac{\Delta}{2} + 2\sqrt{\Delta}$ it follows that $|z|_1 \le n - k$. We only need to estimate the probability of the former event. Since $|z|_1^D$ follows a binomial distribution $\Bin(\Delta, \frac{1}{2})$ with expectation $\frac{\Delta}{2}$, the probability that $|z|_1^D$ exceeds $\frac{\Delta}{2} + 2\sqrt{\Delta}$ can be bounded via Chernoff bounds.
    \begin{align*}
        \Pr[|z|_1^D > \frac{\Delta}{2} + 2\sqrt{\Delta}] &= \Pr[|z|_1^D > E[|z|_1^D]\left(1 + \frac{4}{\sqrt{\Delta}}\right)] \\
        &\le \exp\left(-\frac{\left(\frac{4}{\sqrt{\Delta}}\right)^2 \frac{\Delta}{2}}{3}\right) = e^{-8/3}.
    \end{align*}
    Thus, if $k \ge 100$, then the probability that $|y|_1 < |z|_1 \le n - k$ is at least
    \begin{align*}
        c_1' - e^{-8/3} =  \frac{1}{2}\left(1 - \sqrt{\frac{1}{\pi}}\right) - e^{-8/3} \ge 0.148 \eqqcolon c_1''.
    \end{align*}
    
    Finally, we treat small values of $k < 100$. All arguments up to eq.~\eqref{eq:checkpoint} work for them as well, but $\frac{|y|_0 - k}{2} \ge 2\sqrt{2|y|_0}$ holds only when $|y|_0 \ge k + 16 + 4\sqrt{2k + 16}$ (which for $k < 100$ is not necessarily larger than $k + 8\sqrt{k}$). In these small distances from the global optimum we have $\Delta \le 2|y|_0 = O(1)$, hence the probability $\Pr[|z|_1^D = |y|_1^D + 1]$ is at least some constant (independent of $n$ and $k$). Denote this constant by $c_1'''$.

    \emph{Uniting cases 1 and 2.} We now have 
    \begin{align*}
        \Pr[f(z) > f(y)] \ge \min\{c_1', c_1'', c_1'''\} \eqqcolon c_1, 
    \end{align*}
    which completes the proof of the first statement. Note that $c_1$ also bounds $\Pr[f(z) > f(y) - \sqrt{k}]$ for all $y$ such that $f(y) \le n - 8\sqrt{k}$, hence it is our candidate for $c_3$.

    \textbf{Case 3: $y$ is close to the local optima.} Namely, in this case we assume that $n - k - 8\sqrt{k} < |y|_1 < n - k$. In this case, we utilize Lemma~\ref{lem:binom-lower}. Note that we have $|x|_0 \le |y|_0 \le k + 8\sqrt{k}$, hence the distance between them is at most $\Delta \le 2k + 16\sqrt{k} \le 18k$. In this case, to create a strictly better individual, it is sufficient to have $|z|_1^D = \lfloor \frac{\Delta}{2} \rfloor + 1$, since, similar to eq.~\eqref{eq:z1}, it implies
    \begin{align*}
        |z|_1 = \frac{|x|_1 + |y_1|}{2} + 1 - [\Delta \text{ is odd}]
    \end{align*}
    where $[\Delta \text{ is odd}] = 1$ if $\Delta$ is odd, and $0$ otherwise. If $|x|_1 = |y|_1$ (which implies that $\Delta$ is even), then $|z|_1 = |y|_1 + 1$, and since $y$ is not in the local optimum, $z$ does not fall to the fitness valley. Otherwise, this implies $|y|_1 < |z|_1 \le |x|_1$, hence $z$ is also better than $y$ and does no fall into the fitness valley.

    We now estimate the probability of $|z|_1^D = \lfloor \frac{\Delta}{2} \rfloor + 1$.
    If $\Delta$ is even, by Lemma~\ref{lem:binom-lower}, this probability is at least
    \begin{align*}
        \binom{\Delta}{\frac{\Delta}{2} + 1} 2^{-\Delta} = \binom{\Delta}{\frac{\Delta}{2} + \frac{1}{\sqrt{\Delta}} \cdot \sqrt{\Delta}}  2^{-\Delta} \ge \frac{1}{2\sqrt{\pi \Delta} e^{4/\sqrt{\Delta}}},
    \end{align*}
    and if $\Delta$ is odd, then it is at least 
    \begin{align*}
        \binom{\Delta}{\frac{\Delta}{2} + \frac{1}{2}} 2^{-\Delta} = \binom{\Delta}{\frac{\Delta}{2} + \frac{1}{2\sqrt{\Delta}} \cdot \sqrt{\Delta}}  2^{-\Delta} \ge \frac{1}{2\sqrt{\pi \Delta} e^{2/\sqrt{\Delta}}},
    \end{align*}
    Since $\Delta \le 18k$, in both cases this is at least 
    \begin{align*}
        \frac{1}{2\sqrt{\pi \Delta} e^4} \ge \frac{1}{2\sqrt{18\pi k}e^4} = \frac{c_2}{\sqrt{k}},
    \end{align*}
    where we define $c_2 \coloneqq \frac{1}{6\sqrt{2\pi}e^4}$.

    Also, in this sub-case, by Lemma~\ref{lem:binom-lower} for all $i \in [0..\lfloor\sqrt{\Delta}\rfloor]$ the probability that $|z|_1^D = \lceil \frac{\Delta}{2} \rceil - i$ is at least
    \begin{align*}
        \binom{\Delta}{\lceil \frac{\Delta}{2} \rceil - i} 2^{-\Delta} = \binom{\Delta}{\lceil \frac{\Delta}{2} \rceil - \frac{i}{\sqrt{\Delta}} \cdot \sqrt{\Delta}}  2^{-\Delta}  \ge \frac{1}{2\sqrt{\pi \Delta} e^{4i/\sqrt{\Delta}}} \ge \frac{1}{2\sqrt{\pi \Delta} e^4}. 
    \end{align*}
    Hence, recalling that $\Delta \le 18k$, we have
    \begin{align*}
        \Pr[f(z) > f(y) - \sqrt{k}] &\ge  \Pr[f(z) > f(y) - \sqrt{\frac{\Delta}{18}}] \\
        &\ge \sum_{i = 0}^{\lfloor \sqrt{\frac{\Delta}{18}} \rfloor} \Pr[|z|_1^D = \lceil \frac{\Delta}{2} \rceil - i] \\
        &\ge \sqrt{\frac{\Delta}{18}} \cdot \frac{1}{2\sqrt{\pi \Delta} e^4} = \frac{1}{6e^4\sqrt{2\pi}} \eqqcolon c_3'.
    \end{align*}

    Taking $c_3 \coloneqq \min\{c_1, c_3'\}$ completes the proof of the last statement.
\end{proof}

With Lemma~\ref{lem:better-offspring}, we can prove Lemma~\ref{lem:time-to-plateau}.

\begin{proof}[Proof of Lemma~\ref{lem:time-to-plateau}]
Enumerate the fitness values of $\jump$ by $a_0, a_1, \ldots , a_n$ in increasing order (that is, $a_n$ is the optimal value $n + k$ and $a_{n - 1}$ is the value at local optima $n$). For each $i \in \{0, \ldots, n\}$ and $j \in \{0, \ldots, \mu - 1\}$, we say that a population $P_t$ is in state $(i, j)$ if all individuals in $P_t$ have fitness at least $a_i$, and exactly $j$ individuals have fitness strictly greater than $a_i$. We aim to estimate the time until the algorithm reaches a population in a state $(i, j)$ with $i \geq n - 1$, which means that all individuals are either on the \plateau or are optimal. We call such states \emph{target states}. Note that the state of the population can only improve over time, where we define state $(i, j)$ to be better than state $(\ell, m)$ if $i > \ell$, or if $i = \ell$ and $j > m$ (that is, they are ordered lexicographically). Since states are attained in a non-decreasing order, each state is visited at most once, hence the time to reach a target state can be bounded by the sum of the times required to leave each non-target state. To estimate the time to leave a given state, we provide lower bounds on the probability of leaving that state in a single iteration.

We distinguish two cases: when the population consists only of copies of the same individual, and when there are several different genotypes in the population.

\textbf{Case 1: population of copies.} This is possible only in states $(i, 0)$. To escape this state the algorithm has to create a strictly better individual. It is possible by either performing a mutation-only iteration, where the algorithm flips one bit leading to a better state, or by a crossover iteration, where mutation after crossover does the same (note that the result of crossover will be identical to its parents, since the parents are identical). Therefore, for states $(i, 0)$ with $i < k - 2$ (representing the fitness valley), the probability of escaping in one iteration is the probability that mutation flips one of $(n - i + 1)$ one-bits (and does not flip any other bit), that is,
\begin{align*}
\frac{(n-(i+1))}{n} \left(1-\frac{1}{n}\right)^{n-1} \geq \frac{(n-k+1)}{en} \geq \frac{1}{2e}.
\end{align*}
Similarly, the probability to escape a state $(i, 0)$ where $i \ge k - 1$ is at least
\begin{align*}
\frac{(n - (i - (k - 1)))}{n} \left(1-\frac{1}{n}\right)^{n-1} \geq \frac{(n-i+k-1)}{en}.
\end{align*}

\textbf{Case 2: diverse population.} In this case, if the algorithm performs crossover and furthest parent selection, then it definitely chooses different individuals as parents. The algorithm escapes the current state, if it creates an individual that is strictly better than the worst individual in the population (since it either increases $i$ or $j$). We will use Lemma~\ref{lem:better-offspring} to estimate the probability to create such offspring. We consider two sub-cases depending on the worst fitness in the population. In both sub-cases we count on the algorithm to perform a crossover iteration with furthest parent selection (this occurs with probability $p_c\pfu$) and on mutation to flip no bits (this occurs with probability $(1 - \frac{1}{n})^n \ge \frac{1}{4}$).

\emph{Sub-case 2.1: $i < n - 9 \sqrt{k}$.} This implies that the worst fitness in the population is \emph{at most} $n - 9\sqrt{k}$. Then let $y$ be the worst of two parents chosen for crossover. If $f(y) \le n - 8\sqrt{k}$, then by Lemma~\ref{lem:better-offspring} (1) with probability $c_1$ we create an offspring that is better than $y$, hence it is better than the worst individual in the population. Otherwise, if $f(y) > n - 8\sqrt{k}$, then by Lemma~\ref{lem:better-offspring} (3) we create an offspring $z$ with fitness larger than $f(y) - \sqrt{k} > n - 9\sqrt{k} > i$ with probability at least $c_3$. Therefore, in this sub-case, the probability to escape the current state is at least $\frac{p_c\pfu}{4} \min\{c_1, c_3\}$.

\emph{Sub-case 2.2: $i \ge n - 9 \sqrt{k}$.} In this case, by Lemma~\ref{lem:better-offspring} (1) and (2), the probability to create an offspring that is better than the worst of the parents (and hence, its fitness is better than the worst fitness in the population) is at least $\min\{c_1, \frac{c_2}{\sqrt{k}}\}$. Therefore, probability to escape the current state is at least $\frac{p_c\pfu}{4} \min\{c_1, \frac{c_2}{\sqrt{k}}\}$

Now, if we denote by $p_{i,j}$ the probability to escape state $(i, j)$ in one iteration (which we estimated above), then the expected time until it happens is $\frac{1}{p_{i,j}}$. Summing these expected times over all states with $i < n - 1$, we can obtain on upper bound on time until the whole population is on the \plateau. We split this sum into several parts, and in the following we use $\frac{1}{\min(a, b)} \le \frac{1}{a} + \frac{1}{b}$ for multiple times. First, for $i < k - 2$ we have
\begin{align*}
    \sum_{i = 0}^{k - 3} &\Bigg(\frac{1}{1/(2e)}  + \sum_{j = 1}^{\mu - 1} \frac{1}{\frac{p_c\pfu}{4}\min\{c_1, c_3\}}\Bigg) \\
    &\leq \sum_{i = 0}^{k - 3} \left(2e  + \sum_{j = 0}^{\mu - 1} \frac{1}{\frac{p_c\pfu}{4}\min\{c_1, c_3\}}\right)  = O\left(k + \frac{k\mu}{p_c\pfu}\right).
\end{align*} 
Then, for $i \in [k-2..n-\lfloor 9\sqrt{k} \rfloor - 1]$ we have
\begin{align*}
    \sum_{i = k - 2}^{n - \lfloor 9\sqrt{k} \rfloor - 1} & \Bigg(\frac{1}{ \frac{n - i + k - 1}{en}} 
    + \sum_{j = 1}^{\mu - 1} \frac{1}{\frac{p_c\pfu}{4}\min\{c_1, c_3\}}\Bigg) \\
    &\leq \sum_{i = k - 2}^{n - \lfloor 9\sqrt{k} \rfloor - 1}  \Bigg(\frac{en}{(n - i + k - 1)} + \sum_{j = 0}^{\mu - 1} \frac{4}{p_c\pfu\min\{c_1, c_3\}}\Bigg) \\
    &= O\left(n \log(\frac{n}{k}) + \frac{n\mu}{p_c\pfu}\right).
\end{align*} 
Finally, for larger $i$ we have
\begin{align*}
    \sum_{i = n - \lfloor 9\sqrt{k} \rfloor}^{n - 2} & \Bigg(\frac{1}{ \frac{n - i + k - 1}{en}}
    + \sum_{j = 1}^{\mu - 1} \frac{1}{\frac{p_c\pfu}{4}\min\{c_1, \frac{c_2}{\sqrt{k}}\}}\Bigg) \\ 
    &\le \sum_{i = n - \lfloor 9\sqrt{k} \rfloor}^{n - 2} \Bigg(\frac{en}{(n - i + k - 1)} + \sum_{j = 0}^{\mu - 1} \frac{4}{p_c\pfu\min\{c_1, \frac{c_2}{\sqrt{k}}\}}\Bigg) \\
    &= O\left(n + \sqrt{k}\mu \cdot\frac{\sqrt{k}}{p_c\pfu}\right) = O\left(n + \frac{k\mu}{p_c\pfu}\right)
\end{align*}

Summing these three bounds, we complete the proof.
\end{proof}

We note that it is difficult to make a meaningful comparison between the bound from Lemma~\ref{lem:time-to-plateau} and the runtime of the vanilla \muoga with uniform parent selection, as there is limited prior work analyzing its behavior on such easy-to-climb landscapes. The closest related result for the vanilla \muoga is the analysis on the OneMax benchmark (essentially, \jump with $k = 1$) provided by~\citet{CorusO20} who proved a $O(n\log(n))$ runtime, but only for small values of $\mu = o(\sqrt{\log(n)})$. Our result indicates that the furthest parent selection with $p_c\pfu = \Omega(1)$ has a similar performance (in asymptotic sense) for such small population sizes. It is also notable that~\citet{Witt2006} showed $\Theta(n\log(n) + \mu n)$ runtime of the mutation-only \muoea, which is the same as our upper bound for the \muoga with furthest parent selection. This is good news for our algorithm, since it is known that some parent selection methods can slow down evolutionary algorithms. For example, the parent selection for the mutation-only \muoea proposed in~\cite{CorusLOW21} is slower on OneMax, solving it in $\Theta(\mu n \log(n))$ (despite this, that selection was shown to be very helpful at avoiding premature convergence to local optima).

Now we are able to combine Lemma~\ref{lem:time-to-plateau} with Theorem~\ref{thm:k-general} to derive the total runtime of the \muoga with furthest parent selection on \jump.

\begin{corollary}\label{cor:k-runtime-final}
Consider the \muoga with furthest parent selection on \jump for $2 \leq k \leq n/3$. Let $r \in [k-1]$. Suppose $\mu \ge c \cdot 4^r (k-r)\log(n)/(p_c \pfu)$ for a sufficiently large constant $c$. Then the expected number of fitness evaluations that the algorithm takes to find the global optimum is $O(n\mu /(p_c \pfu) + 4^r \cdot n^{k-r}/(p_c \pfu))$.
\end{corollary}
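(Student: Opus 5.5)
The plan is to split the run into two phases at the first time $T_1$ at which every individual of the population lies on the \plateau or the optimum has been found. The expected length of each phase is bounded by an earlier result, and the two bounds are then added.

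\emph{Phase 1.} By Lemma~\ref{lem:time-to-plateau}, which applies because $k \le n/3 \le n/2$, we have
\[ E[T_1] = O\left(\frac{n\mu}{p_c\pfu} + n\log\left(\frac{n}{k}\right)\right). \]
Since $p_c\pfu \le 1$ and $\mu \ge c\cdot 4^r (k-r)\log(n)/(p_c\pfu) \ge \log(n)$, the term $n\log(n/k)$ is at most $n\log n = O(n\mu/(p_c\pfu))$. It is therefore absorbed, and $E[T_1] = O(n\mu/(p_c\pfu))$.

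\emph{Phase 2.} If the optimum is already present at time $T_1$, there is nothing more to do. Otherwise every individual of $P_{T_1}$ lies on the \plateau. Theorem~\ref{thm:k-general} allows an arbitrary starting population on the \plateau, so we may apply it to the process restarted at time $T_1$. This uses the strong Markov property: the algorithm is time-homogeneous given the current population, and $\pfu$ is defined as an infimum over all $t$, so its bound still holds after the restart. The remaining expected time is then $O(4^r n^{k-r}/(p_c\pfu))$, uniformly over all possible populations $P_{T_1}$. The theorem's hypothesis on $\mu$ is stated with $\ln(n)$ while the corollary uses $\log(n)$; the two differ only by a constant factor, which is absorbed into $c$.

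Adding the two expectations, with one fitness evaluation per iteration plus the $\mu$ initial evaluations (and $\mu$ is dominated by the first term), gives the claimed bound
\[ O\left(\frac{n\mu}{p_c\pfu} + \frac{4^r n^{k-r}}{p_c\pfu}\right). \]
The only delicate point is justifying the restart at the random time $T_1$. The argument is that the Phase 2 bound is worst-case over initial \plateau populations, so conditioning on $\mathcal{F}_{T_1}$ and taking expectations gives the sum directly.
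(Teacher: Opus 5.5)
Your proposal is correct and follows the paper's proof. You bound the time until the whole population is on the plateau by Lemma~\ref{lem:time-to-plateau}, add the bound of Theorem~\ref{thm:k-general} for the remaining time, and absorb the $n\log n$ term using $\mu \ge \log n$ (which holds since $k-r\ge 1$); your remarks on the restart at $T_1$, on $\ln$ versus $\log$, and on the $\mu$ initial evaluations just make explicit what the paper leaves implicit.
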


\begin{proof}
We need expected $O(n \mu/(p_c \pfu) + n \log(n))$ fitness evaluations until the whole population is on the \plateau by Lemma~\ref{lem:time-to-plateau}. Then, by Theorem~\ref{thm:k-general}, we need further $O(4^r \cdot n^{k-r}/(p_c \pfu))$ fitness evaluations in expectation to find the all-ones string. Summing up both runtimes provides the result, noting that the term $n \log(n)$ is absorbed by $n\mu/(p_c \pfu)$ since $k-r \geq 1$.
\end{proof}

For the smallest possible value of $\mu$ and for $r = k - 1$, this upper bound is simplified to $O(\frac{4^k n \log(n)}{(p_c\pfu)^2})$, which is $O(4^k n \log(n))$ when $p_c\pfu = \Omega(1)$. Comparing it to the $O(n^{k - 1})$ bound proved by~\citet{Doerr2024} for the vanilla \muoga, one can see that our bound is by factor $\Theta(\frac{(p_c\pfu)^2 n^{k - 2}}{4^k \log(n)})$ smaller. Interestingly, this significant performance boost is comparable with the boost of diversity mechanisms studied by~\citet{Dang2017} that do not allow to accept individuals that reduce diversity while breaking ties between individuals with equal fitness. 
However, \emph{in contrast with these mechanisms}, parent selection does not impose constraints on the search process by stricter offspring selection, but on the contrary it increases the variance of the crossover outcomes, allowing better exploration of the search space.

\section{Conclusion}

In this paper, we have performed runtime analysis of the \muoga with parent selection mechanism that prioritizes the most distant parents in iterations with crossover on the \jump benchmark. This analysis is powered by the introduced diversity measure which simultaneously captures the maximum distance $d_t$ between individuals in the population and the number of pairs $m_t$ of individuals in that distance. Our analysis revealed complex population dynamics that could be described thanks to this measure. Despite the diversity of the population can decrease sometimes, the proposed parent selection mechanism makes events of significant decrease (when $d_t$ decreases) very unlikely, so they rarely happen before we find the optimum. Interestingly, a similar picture was reported by~\citet{Doerr2024} in their experimental study of the vanilla \muoga: the maximum distance between individuals of population grew over time, and on each level of diversity the algorithm's position was solidified by creating more pairs of individuals in that distance.
Although the role of crossover in diversifying the population and in enhancing hill-climbing that we observed in this paper has been magnified by parent selection, that empirical result makes us believe that similar dynamics also arise in the vanilla \muoga. Thus, we are optimistic that the results of this paper also bring us closer to the understanding of the effectiveness of this classic GA and, more broadly, of population dynamics in crossover-based algorithms. 

\begin{acks}
  The authors thank Timo K{\"o}tzing, and Aishwarya Radhakrishnan for their contributions to the early stages of this project, including initiating the study and valuable discussions. We also acknowledge funding by the European Union (ERC, ``dynaBBO'', grant no.~101125586). 
\end{acks}

\bibliographystyle{ACM-Reference-Format}
\bibliography{references}

\end{document}